\theoremstyle{plain}
\newtheorem{theorem}{Theorem}[section]
\theoremstyle{definition}
\newtheorem{definition}[theorem]{Definition}
\theoremstyle{remark}
\DeclareMathOperator{\Tr}{Tr}
\DeclarePairedDelimiter{\norm}{\lVert}{\rVert}
\newcommand{\mtx}{\bm} 
\newcommand{\vct}{\bm} 
\newcommand{\E}{\mathbbm{E}}
\newcommand{\R}{\mathbbm{R}}
\def\H{\mathcal{H}}
\date{}
\title{Inference and Interference: The Role of Clipping, Pruning and Loss Landscapes in Differentially Private Stochastic Gradient Descent}
 \author{Lauren Watson,\textsuperscript{1}
Eric Gan,\textsuperscript{2}
Mohan Dantam ,\textsuperscript{1}
Baharan Mirzasoleiman,\textsuperscript{2}
Rik Sarkar \textsuperscript{1}\\
\textsuperscript{1}{School of Informatics, University of Edinburgh}\\
\textsuperscript{2}{University of California, Los Angeles}\\
lauren.watson@ed.ac.uk, baharan@cs.ucla.edu, rsarkar@inf.ed.ac.uk}
\begin{document}

\maketitle

\begin{abstract}
Differentially private stochastic gradient descent (DP-SGD) is known to have poorer training and test performance on large neural networks, compared to ordinary stochastic gradient descent (SGD). In this paper, we perform a detailed study and comparison of the two processes and unveil several new insights. By comparing the behavior of the two processes separately in early and late epochs, we find that while DP-SGD makes slower progress in early stages, it is the behavior in the later stages that determines the end result. This separate analysis of the clipping and noise addition steps of DP-SGD shows that while noise introduces errors to the process, gradient descent can recover from these errors when it is not clipped, and clipping appears to have a larger impact than noise. These effects are amplified in higher dimensions (large neural networks), where the loss basin occupies a lower dimensional space. We argue theoretically and using extensive experiments that magnitude pruning can be a suitable dimension reduction technique in this regard, and find that heavy pruning can improve the test accuracy of DPSGD.

\end{abstract}

\section{INTRODUCTION}
\label{sec:intro}

The wide-ranging potential applications of deep learning models trained on large datasets has recently raised concerns about the privacy properties of these models. The current method to enforce privacy while training such models is differential privacy~\citep{Dwork06differentialprivacy}.  The most popular of these methods is Differentially Private Stochastic Gradient Descent (DP-SGD)~\citep{abadi2016deep}. By restricting the maximum gradient norm and adding suitable noise at every step of ordinary Stochastic Gradient Descent (SGD), it ensures the rigorous privacy properties of differential privacy and defends against privacy attacks~\citep{shokri2017membership,fredrikson2015model,nasr2019comprehensive,carlini2019secret}.

The perturbations produced by noise addition and gradient clipping come at the cost of reduced utility, and the practical performance of DP-SGD -- as measured by common loss and accuracy measures -- is found to be consistently inferior to that of SGD, even on standard benchmark tasks~\citep{papernot2021tempered}. This gap is governed by several entities including the class of models, the gradients generated by SGD, the perturbation induced by DP-SGD, and the {\em loss landscape} over the space of models $\H$ where the training process takes place. The relations between these entities are complex, and are the subject of this paper. It has been conjectured that the primary weakness of DP-SGD is in the early rounds, where the noise may inhibit it from identifying a good basin~\citep{ganesh2023public} within the loss landscape. Having missed the opportunity, it is constrained to the quality of the basin it finds. On the other hand, studies on the SGD process have found that it is often possible to arrive at equally good minima from different starting points of SGD~\citep{frankle2020linear} and for some common learning tasks, these different loss basin floors are connected~\citep{garipov2018loss}. The performance of DP-SGD is known to deteriorate with the dimension of $\H$, or equivalently, with the number of parameters of the model, since to privatize a greater number of parameters requires greater noise. Thus, several methods have been proposed to reduce the number of parameters by pruning, gradient sparsification and others~\citep{luo2021scalable,li2022does,zhou2021bypassing}.

\subsection{Contributions}

We provide several insights into the behavior of DP-SGD by comparing it with the fine grained behavior of ordinary SGD. The results include observations about the nature of the loss basins, why DP-SGD fails to find a floor (minimum) of the basin as effectively as SGD does, how clipping and noise addition contribute to the process, and how pruning improves the performance. Below, we summarize these findings.

If the early rounds of training are decisive, as suggested in~\citep{ganesh2023public} then the process -- SGD or DP-SGD -- used in the first few (say $k$ epochs) will determine the quality of the model, irrespective of the process operating in later epochs. To test this idea, we divided the $T$ step training process into Phase 1 (first $k$ epochs) and Phase 2 (remaining $T-k$ epochs). Then four models were trained corresponding to using SGD or DP-SGD in each of the phases. The results consistently show that the process used in Phase 2 has a larger impact. In Figure~\ref{fig:sgd-dpsgd-combos}, the loss and accuracy levels at the end of Phase 1 are clearly different for SGD and DP-SGD. But in Phase 2, the results change, so that the SGD processes in Phase 2 converge to similar low losses, while the processes running DP-SGD in Phase 2 converge to higher loss. Thus, contrary to the conjecture in~\citet{ganesh2023public}, it is the Phase 2 process that is decisive, suggesting that the  perturbations prevent DP-SGD from finding a good solution irrespective of initial steps. Similar results on other models are found in Section~\ref{sec:experiments}. 

\begin{figure}[htbp]
    \centering
    \includegraphics[width=1.5in]{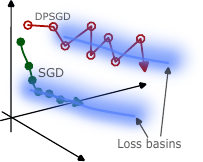}
    \caption{Loss Basins with $1$-D Basin floors (shown as blue lines). Loss basin dimension is low compared to ambient dimension. SGD finds and stays at the floor. DP-SGD finds it hard to stay at the floor as the large magnitude of random noise easily throws it far from the floor, and loss increases rapidly in random directions.}
    \label{fig:basins-schematic}
\end{figure}

Next, we turn to studying the loss landscape and loss basins that guide the training process. In experiments (also seen in~\citep{frankle2020linear,garipov2018loss}), SGD usually finds a model at the floor (or plateau) of a loss basin. At the floor, there are a few  directions where the loss stays flat. We find that in most other directions, the loss rises rapidly. The conclusion from this result is that while the ambient dimension (the number of parameters or weights) of a neural network is large, a loss basin lies only in a relatively lower dimensional subspace. The idea is shown schematically in Figure~\ref{fig:basins-schematic}, where one dimensional loss basins are embedded in $3D$ space. In almost any direction from a basin floor, the loss rises rapidly. When the number of orthogonal dimensions are larger, the effect is more pronounced, and has major consequences for DP-SGD: since the noise induced by DP-SGD is a random vector drawn from a high dimensional Gaussian, and therefore of large magnitude. The result is that DP-SGD cannot find the basin floor or remain close to it.  (Figure~\ref{fig:basins-schematic}).

Further, even with quite short lengths of Phase 1, SGD can find its basin, after which SGD processes end on the same basin floor. However, when DP-SGD is used in Phase 2, processes usually do not end in the same basin. In fact, in many cases, DP-SGD jumps from one basin to another. Even when models do converge to the same basin, DP-SGD does not find a floor that SGD does consistently.


We then examine the effects of clipping and noise addition separately. In experiments, while SGD can tolerate noise addition of the same scale as that required to guarantee $\epsilon=1$ differential privacy, clipping to the same level as an $\epsilon=1$ DP-SGD model significantly reduces SGD test accuracy.  We use analysis at both small scales (a few training steps) and larger scales (a full training run) to provide evidence that the clipping operation prevents the DP-SGD model from recovering from noise addition that would otherwise have little impact on the training ability for the model.

Given this understanding of the low dimensional loss basins, and the impact of clipping and noise addition on finding these solutions, we show that suitable pruning can mitigate these effects. As with previous work (\citep{luo2021scalable}) we find that appropriate pruning improves the accuracy of DP-SGD and that pruned DP-SGD models better replicate the low dimensional behavior of SGD models. Even when the pruning is carried out using only $5\%$ of the data or with a different dataset, DP-SGD on the pruned network generalizes better than DP-SGD on the full network. Separate experiments on clipping and noise addition show that pruning significantly reduces the impact of the clipping operation.

Theoretically, while it is clear from previous work that the effect of noise increases with the number of parameters~\citep{bassily2014private} and therefore pruning will reduce the impact of noise, the effect of clipping and the corresponding potential benefits of pruning are less immediate. We show that  pruning can in fact reduce the harmful effects of clipping, by defining a term $R$ based on the variance along each dimension and the true gradient norm, which decreases with improvement in clipped gradient. The reduction of this term with pruning is verified empirically.

\section{TECHNICAL BACKGROUND}

\subsection{Differential privacy and Stochastic Gradient Descent}

Differential privacy as outlined in Definition~\ref{def:DP} guarantees that the presence or absence of a single datapoint in the underlying training dataset will have a limited impact on the output model, with lower values of $\epsilon$ corresponding to stronger privacy guarantees.
\begin{definition}[\bf Neighboring Databases]
Two databases $D, D'$ are neighboring if $H(D, D') \leq 1$, where $H(\cdot, \cdot)$ represents the hamming distance.
\end{definition}
\begin{definition}[\bf Differential Privacy~\citep{Dwork06differentialprivacy}] \label{def:DP}
  A randomized algorithm $M$ satisfies $(\epsilon,\delta)$-differential privacy if for all neighboring databases $D$ and $D'$ and for all possible outputs $O\subseteq \text{Range}(M)$, \mbox{
  $
      \Pr[M(D) \in O] \leq e^{\epsilon} \cdot \Pr[M(D') \in O] +\delta
  $}
\end{definition}

Differentially private algorithms often rely on carefully calibrated noise addition to provably ensure differential privacy~\citep{dwork2006calibrating}. \citet{abadi2016deep} introduced a differentially private variant of the stochastic gradient descent algorithm (DP-SGD). Given data distribution $\mathcal{Z}=\mathcal{X}\times \mathcal{Y} $, loss function $L:(\mathbf{\theta}, \mathbf{z})\rightarrow \mathbb{R}$ for any $\mathbf{z} \in \mathcal{Z}$, learning rate $\eta\in\mathbb{R}^+$, and model parameters $\mathbf{\theta}_t$ at training step $t$, SGD calculates the gradient $\mathbf{g}_t=\nabla_{\mathbf{\theta}_t}L(\theta_t, \mathbf{z})$ and updates the model parameters to $\mathbf{\theta}_{t+1} = \mathbf{\theta}_{t}-\eta \mathbf{g}_t $. In contrast, DP-SGD performs each update step with a perturbed gradient, $\tilde{\mathbf{g}}_t$. The perturbation of the true gradient follows two distinct steps. Firstly, the gradient norm of $\mathbf{g}_t$ is clipped to have maximum norm $C$, with $\hat{\mathbf{g}}_t=\frac{\mathbf{g}_t}{ \text{max}(1, \frac{\lVert \mathbf{g}_t \rVert_2}{C})}$. Then Gaussian noise with scale $\sigma$ is added where $\sigma$ is determined by the required privacy guarantee $\epsilon$, with $\tilde{\mathbf{g}}_t=\hat{\mathbf{g}}_t+ \mathcal{N}(0, C\sigma^2\mathbb{I})$. Finally, the update is performed: $\theta_{t+1} = \theta_{t}-\eta \tilde{\mathbf{g}}_t$.

\subsection{Linear mode connectivity} \label{sec:linear-mode-connectivity}

Loss basins that contain different models (say, $\theta_0$ and $\theta_1$) can be compared using a technique called Linear Mode Connectivity~\citep{frankle2020linear}. In this technique, the loss values are computed for convex combinations of the two models: $(1-\alpha)*\theta_0 + \alpha\theta_1$ for $\alpha\in[0,1]$, that lie along the linear segment joining the two models. In practice losses are computed at discrete points, usually $30$~\citep{frankle2020linear}. Figure~\ref{fig:sgd-linear-mode} shows the typical such linear mode loss profile for two SGD models. The differences in initialization and inherent randomness in SGD data processing can cause the models to end in different basins with higher loss in between, as seen. It has been shown by~\citep{garipov2018loss} that the loss basins may be connected via non-linear paths. The linear mode connectivity technique is best suited to find models that are in the same linearly connected (convex) parts of a basin -- where the loss profile at all interpolated points are below some threshold above the loss at the end points, that is, there is no pronounced peak of loss in between the models. The height of this peak loss relative to the loss at the end points is called the {\em instability} of the training process. More formally, suppose $\mathcal{E}(\theta)$ is the training or test loss for model $\theta$ and $\mathcal{E}_{\alpha}(\theta_0, \theta_1)=\mathcal{E}((1-\alpha)\theta_0+\alpha\theta_1 )$ for $\alpha \in [0,1]$ is the loss for a convex combination of the two models .

\begin{definition}[Linear Interpolation Instability] Let
$\mathcal{E}_{sup}(\theta_0, \theta_1)=\sup_{\alpha }\mathcal{E}_{\alpha}(\theta_0, \theta_1)$ be the highest loss attained while interpolating between models $\theta_0$ and $\theta_1$ with $\alpha\in[0,1]$. Let $\bar{\mathcal{E}}_{\alpha}(\theta_0, \theta_1) = \text{Mean}(\mathcal{E}(\theta_0), \mathcal{E}(\theta_1))$ represent the mean loss of these models. Then the \textit{linear interpolation instability} is given by $\mathcal{E}_{sup}(\theta_0, \theta_1)-\bar{\mathcal{E}}_{\alpha}(\theta_0, \theta_1) $.

\end{definition}

High values of instability caused by a peak as in Figure~\ref{fig:sgd-linear-mode}, imply that the models do not both lie on the floor of the same local basin.

\section{THEORETICAL ANALYSIS}
\label{sec:theory}

In this section, we quantify the effect of clipping in gradient descent and show an example setting where pruning helps reduce this effect.
\subsection{Measuring the effect of clipping}
One possible way to analyse the change in the gradient step caused due to clipping is to measure the change in direction compared to the true gradient. Intuitively, if the direction doesn't change too much, it can be thought of as taking a smaller step compared to the true gradient in a similar direction. To this end, we observe that the projection of the clipped gradient onto the true gradient can be lower bounded by a quantity which depends on the variance of each dimension over data points and gradient norm. We denote it by $R$ which is defined below.
\begin{definition}\label{def:R}
Let $\mathcal{G} = \{\vct{g}^{(i)}\}_{i=1}^n$ be per example gradients, with mean $\bar{\vct{g}} = \frac{1}{n} \sum \vct{g}^{(i)}$. Then define
\begin{align}
R = \frac{\Tr(Cov(\mathcal{G}))}{\norm*{\bar{\vct{g}}}^2} = \frac{\frac{1}{n} \sum \norm*{\vct{g}^{(i)} - \bar{\vct{g}}}^2}{\norm*{\bar{\vct{g}}}^2}
\end{align}
\end{definition}
Consider the gradients $\vct{g}^{(i)}$ from \ref{def:R}. The clipped gradient $\vct{c}$ with clipping norm $C$ is the mean of per example clipped gradients i.e.,
\begin{align}
\vct{\Tilde{g}}^{(i)} &= \vct{g}^{(i)} \cdot \min\left(1,\frac{C}{\norm*{\vct{g}^{(i)}}}\right) \\
\vct{c} &= \frac{1}{n}\sum \vct{\Tilde{g}}^{(i)} \label{eq:true_clipped_grad}
\end{align}
However, to simplify the analysis and based on the empirical observation that most per-example gradients are larger than the clipping norm throughout the training in the standard noise settings of DP-SGD, we make the assumption that $C \leqslant \norm*{\vct{g}^{(i)}}$ for every $i$. This simplifies $\vct{c}$ to
\begin{equation}\label{eq:approx_clipped_grad}
    \vct{c} = \frac{C}{n} \sum \frac{\vct{g}^{(i)}}{\norm*{\vct{g}^{(i)}}}
\end{equation}
\begin{theorem} \label{thm:clipped_grad_alignment}
Assume that $\bar{\vct{g}} \neq 0$ and $\vct{g}^{(i)} \neq 0$ for all $i = 1, \dots n$. Let the clipped gradient with clipping norm $C$ be as above and $R$ from \ref{def:R}.

Then
\begin{align}
    \vct{c} \cdot \frac{\bar{\vct{g}}}{\norm*{\bar{\vct{g}}}} \geq C\left(1 - \frac{R}{2}\right)
\end{align}
\end{theorem}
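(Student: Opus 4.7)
The plan is to reduce the inner product $\vct{c} \cdot \bar{\vct{g}}/\norm*{\bar{\vct{g}}}$ to an average of per-example cosines $\cos\theta_i$ (where $\theta_i$ is the angle between $\vct{g}^{(i)}$ and $\bar{\vct{g}}$), lower bound each cosine by a quantity directly expressible in terms of $\norm*{\vct{g}^{(i)} - \bar{\vct{g}}}^2$, and then recognize the result as $C(1 - R/2)$ after averaging. Plugging in the simplified form of $\vct{c}$ from \eqref{eq:approx_clipped_grad},
\[
\vct{c} \cdot \frac{\bar{\vct{g}}}{\norm*{\bar{\vct{g}}}} = \frac{C}{n}\sum_{i=1}^n \frac{\vct{g}^{(i)}\cdot \bar{\vct{g}}}{\norm*{\vct{g}^{(i)}}\,\norm*{\bar{\vct{g}}}} = \frac{C}{n}\sum_i \cos\theta_i,
\]
so it suffices to prove $\frac{1}{n}\sum_i \cos\theta_i \geq 1 - R/2$.

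The core step is a straightforward AM--GM/law-of-cosines bound on the individual cosines. Writing the law of cosines as $\norm*{\vct{g}^{(i)} - \bar{\vct{g}}}^2 = \norm*{\vct{g}^{(i)}}^2 + \norm*{\bar{\vct{g}}}^2 - 2\norm*{\vct{g}^{(i)}}\norm*{\bar{\vct{g}}}\cos\theta_i$ and applying $\norm*{\vct{g}^{(i)}}^2 + \norm*{\bar{\vct{g}}}^2 \geq 2\norm*{\vct{g}^{(i)}}\norm*{\bar{\vct{g}}}$ yields $\norm*{\vct{g}^{(i)} - \bar{\vct{g}}}^2 \geq 2\norm*{\vct{g}^{(i)}}\norm*{\bar{\vct{g}}}(1 - \cos\theta_i)$, which rearranges to
\[
\cos\theta_i \;\geq\; 1 - \frac{\norm*{\vct{g}^{(i)} - \bar{\vct{g}}}^2}{2\,\norm*{\vct{g}^{(i)}}\,\norm*{\bar{\vct{g}}}}.
\]
Geometrically this is the same statement as saying that the projection of $\vct{g}^{(i)}$ onto the sphere of radius $\norm*{\bar{\vct{g}}}$ is a contraction towards $\bar{\vct{g}}$ when $\vct{g}^{(i)}$ lies outside the ball of that radius.

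Averaging over $i$,
\[
\frac{1}{n}\sum_i \cos\theta_i \;\geq\; 1 - \frac{1}{2n}\sum_i \frac{\norm*{\vct{g}^{(i)} - \bar{\vct{g}}}^2}{\norm*{\vct{g}^{(i)}}\,\norm*{\bar{\vct{g}}}},
\]
and to convert the right side into $1 - R/2$ I replace $\norm*{\vct{g}^{(i)}}$ by $\norm*{\bar{\vct{g}}}$ in each denominator, invoking the standing assumption from the paragraph preceding \eqref{eq:approx_clipped_grad} (namely $C \leq \norm*{\vct{g}^{(i)}}$) together with the natural regime $\norm*{\bar{\vct{g}}} \leq C$ where clipping actually bites; these together give $\norm*{\vct{g}^{(i)}} \geq \norm*{\bar{\vct{g}}}$ so that $1/(\norm*{\vct{g}^{(i)}}\norm*{\bar{\vct{g}}}) \leq 1/\norm*{\bar{\vct{g}}}^2$. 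Multiplying back by $C$ then produces the stated bound.

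The main obstacle is precisely this last normalization step. A simple example with a single $\vct{g}^{(i)}$ of very small norm and opposite direction to $\bar{\vct{g}}$ shows that the per-example inequality $\norm*{\vct{u}^{(i)} - \bar{\vct{u}}}^2 \cdot \norm*{\bar{\vct{g}}}^2 \leq \norm*{\vct{g}^{(i)} - \bar{\vct{g}}}^2$ can fail when $\norm*{\vct{g}^{(i)}} < \norm*{\bar{\vct{g}}}$, so one cannot drop the ``$\norm*{\vct{g}^{(i)}} \geq \norm*{\bar{\vct{g}}}$'' condition without rethinking. If the paper truly wants the bound to hold without this implicit assumption, one would need to exploit the global constraint $\sum_i (\vct{g}^{(i)} - \bar{\vct{g}}) = 0$ (which does give nontrivial cancellation on the small-norm terms) rather than a termwise inequality; I would flag this as the part of the argument most in need of care when writing the final proof.
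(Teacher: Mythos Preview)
Your termwise bound $\cos\theta_i \geq 1 - \norm*{\vct{g}^{(i)}-\bar{\vct{g}}}^2/(2\norm*{\vct{g}^{(i)}}\norm*{\bar{\vct{g}}})$ is correct (it is equivalent to $(\norm*{\vct{g}^{(i)}}-\norm*{\bar{\vct{g}}})^2\geq 0$), but the gap you yourself flag is real: the replacement of $\norm*{\vct{g}^{(i)}}$ by $\norm*{\bar{\vct{g}}}$ in the denominator requires $\norm*{\vct{g}^{(i)}}\geq\norm*{\bar{\vct{g}}}$, and the extra hypothesis $\norm*{\bar{\vct{g}}}\leq C$ you invoke to get this is \emph{not} assumed in the paper. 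So the termwise route, as written, does not close the argument under the stated hypotheses.

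The paper does not argue termwise. It performs the orthogonal decomposition $\vct{g}^{(i)}=a_i\,\bar{\vct{g}}/\norm*{\bar{\vct{g}}}+b_i\vct{u}_i$ with $\vct{u}_i\perp\bar{\vct{g}}$ and rewrites the desired inequality as $f\geq 3$ for the scale-invariant function
\[
f(a_1,\dots,a_n,b_1,\dots,b_n)=\frac{2}{n}\sum_i\frac{a_i}{\sqrt{a_i^2+b_i^2}}+\frac{1}{n\bar a^{2}}\sum_i(a_i^2+b_i^2),
\]
then attacks this by a global optimisation argument: restrict to compact boxes $I_{\alpha,\beta,\gamma}$ that exclude a neighbourhood of each hyperplane $a_i=0$, solve for the interior critical points explicitly (showing they all satisfy $f\geq 3$), and rule out boundary minima by a case analysis. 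This is exactly the ``exploit the global constraint rather than a termwise inequality'' direction you anticipate in your last paragraph---the relation $\bar a=\tfrac{1}{n}\sum a_i$ enters through the $\bar a$-dependence of $f$ and couples all coordinates in the critical-point equations. What your proposal is missing, then, is this entire global variational argument, which is substantially longer than the two-line termwise calculation. Your small-norm, opposite-direction example is also on point: the regime $a_i<0$ with $|a_i|$ small is precisely why the paper must excise neighbourhoods of $a_i=0$ from its compact domains, and the boundary analysis there is the most delicate part of the paper's proof.
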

From \ref{thm:clipped_grad_alignment} we can observe that the smaller the ratio $R$, the more aligned the accumulated clipped gradient is with the actual direction.

\subsection{Pruning to improve DPSGD}\label{sec:theory_example}
We construct a theoretical example in which performing pruning clips non-important weights, so that the true target function can still be learned by the pruned model. Later, we confirm empirically that in this scenario pruning reduces $R$, thereby minimizing the effect of clipping from DP-SGD.

Formally, we consider a binary classification dataset $\mathcal{D}$, where each example $\vct{x}^{(i)}$ is a vector in $\R^d, d = d_s + d_n$, which consists of a signal part and a noise part. The signal part contains a key vector $\vct{v}$ which determines the label plus Gaussian noise. The noise part consists of only Gaussian noise. The labels $y^{(i)}$ are drawn from $\{-1, 1\}$.
\begin{align}
\vct{x}^{(i)} &= (\vct{x}_s^{(i)}, \vct{x}_n^{(i)}) \\
\vct{x}_s^{(i)} &\in \R^{d_s}, \vct{x}_n^{(i)} \in \R^{d_n}   \\
\vct{x}_s^{(i)} &\sim \mathcal{N}(y_i \vct{v}, \sigma^2 \mtx{I}_{d_s}) \\ \vct{x}_n^{(i)} &\sim \mathcal{N}(0, \sigma^2 \mtx{I}_{d_n})
\end{align}
For simplicity we will consider training on the entire population, but similar results can be obtained for training on a sufficiently large sample using standard concentration inequalities.

Consider training a two layer linear model with hidden layer of size $m$
\begin{gather}
    f(\vct{x}) = \mtx{W}_2 \mtx{W}_1 \vct{x}, \\
    \mtx{W}_2 \in \R^{1 \times m}, \mtx{W}_1 \in \R^{m \times d}
\end{gather}
to minimize the mean squared error
\begin{align}
    \mathcal{L} = \E_{(\vct{x}, y) \sim \mathcal{D}} \left[ \frac{1}{2}\|f(\vct{x}^{(i)}) - \vct{y}^{(i)}\|^2 \right]
\end{align}
To aid in our analysis, we decompose
\begin{gather}
    \mtx{W}_1 = \begin{pmatrix} \mtx{W}_s & \mtx{W}_n \end{pmatrix} \\
    \mtx{W}_s \in \R^{m \times d_s}, \mtx{W}_n \in \R^{m \times d_n}
\end{gather}
Intuitively, $\mtx{W}_s$ consists of connections to the signal part of the input and $\mtx{W}_n$ consists of connections to the noise part of the input which correspond to non-important weights with regards to pruning.

We will consider clipping the first layer, as clipping the second layer is similar to reducing the dimension of the hidden layer $m$.

Under this setting, we obtain the following
\begin{theorem} \label{thm:magnitude_pruning}
Suppose that at initialization the following holds: $\mtx{W}_2^{\top} \mtx{W}_2 = \mtx{W}_1 \mtx{W}_1^{\top}$. Further assume that we train using gradient flow and converge to a global minimum. Then at this point
\begin{align}
    \mtx{W}_n = \mtx{0}
\end{align}
\end{theorem}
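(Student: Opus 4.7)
The plan is to reduce the statement to a linear-algebraic constraint on $W_n$ at a global minimum, and then use the balance property of gradient flow to upgrade that constraint into $W_n = \mtx{0}$.

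\emph{Step 1: decompose the population loss.}
Write $\beta = \mtx{W}_2 \mtx{W}_1$ and split $\beta = (\beta_s, \beta_n)$ with $\beta_s = \mtx{W}_2 \mtx{W}_s$ and $\beta_n = \mtx{W}_2 \mtx{W}_n$. Since $\vct{x}_n$ is independent of $(\vct{x}_s, y)$ with mean zero and covariance $\sigma^2 \mtx{I}_{d_n}$, expanding the square and taking expectations makes the cross term vanish, leaving
\begin{align*}
\mathcal{L} \;=\; \tfrac{1}{2}\, \E\!\left[(\beta_s \vct{x}_s - y)^2\right] \;+\; \tfrac{\sigma^2}{2}\, \norm*{\beta_n}^2 .
\end{align*}
Both terms are nonnegative and the second depends only on $\beta_n$, so any global minimizer must satisfy $\mtx{W}_2 \mtx{W}_n = \beta_n = 0$.

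\emph{Step 2: balance is preserved along gradient flow.}
For a two-layer linear network, gradient flow gives $\dot{\mtx{W}}_1 = -\mtx{W}_2^{\top}\nabla_\beta \mathcal{L}$ and $\dot{\mtx{W}}_2 = -\nabla_\beta \mathcal{L}\, \mtx{W}_1^{\top}$, so both $\tfrac{d}{dt}(\mtx{W}_1 \mtx{W}_1^{\top})$ and $\tfrac{d}{dt}(\mtx{W}_2^{\top} \mtx{W}_2)$ equal $-\mtx{W}_2^{\top}\nabla_\beta \mathcal{L}\,\mtx{W}_1^{\top} - \mtx{W}_1 (\nabla_\beta \mathcal{L})^{\top}\mtx{W}_2$. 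Hence the difference $\mtx{W}_1 \mtx{W}_1^{\top} - \mtx{W}_2^{\top}\mtx{W}_2$ is conserved, and the initial balance $\mtx{W}_2^{\top}\mtx{W}_2 = \mtx{W}_1 \mtx{W}_1^{\top}$ persists at the limit point.

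\emph{Step 3: combine balance with Step 1.}
Let $\vct{u} = \mtx{W}_2^{\top} \in \R^m$. Balance at the minimum reads $\vct{u}\vct{u}^{\top} = \mtx{W}_s \mtx{W}_s^{\top} + \mtx{W}_n \mtx{W}_n^{\top}$. The left-hand side is rank one and the right-hand side is a sum of PSD matrices, so for any $\vct{z} \perp \vct{u}$ we have $\vct{z}^{\top} \mtx{W}_n \mtx{W}_n^{\top} \vct{z} = 0$, forcing $\mtx{W}_n^{\top}\vct{z} = \vct{0}$. Consequently every column of $\mtx{W}_n$ lies in $\mathrm{span}(\vct{u})$, so $\mtx{W}_n = \vct{u}\, \vct{w}^{\top}$ for some $\vct{w} \in \R^{d_n}$. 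Then Step 1 gives $\vct{u}^{\top}\mtx{W}_n = \norm*{\vct{u}}^2 \vct{w}^{\top} = \vct{0}$, so $\vct{w} = \vct{0}$ whenever $\vct{u} \neq \vct{0}$. The degenerate case $\vct{u} = \vct{0}$ is ruled out because it would force $\mtx{W}_1 = \mtx{0}$ by balance, giving $f \equiv 0$ with loss $\tfrac{1}{2}\E[y^2]$, which is strictly beaten by any $\beta_s \propto \vct{v}$ when $\vct{v} \neq \vct{0}$. Therefore $\mtx{W}_n = \mtx{0}$.

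The hard part is Step 3. The loss decomposition alone only pins down the projection $\mtx{W}_2 \mtx{W}_n$, and without additional structure one could hide arbitrary mass of $\mtx{W}_n$ in the null space of $\mtx{W}_2$. The balance invariant is precisely what forces the column space of $\mtx{W}_n$ into the one-dimensional subspace spanned by $\mtx{W}_2^{\top}$, at which point the projection constraint kills it entirely. Steps 1 and 2 are essentially routine: the loss decomposition uses only independence and zero-mean of $\vct{x}_n$, and the conservation of $\mtx{W}_1 \mtx{W}_1^{\top} - \mtx{W}_2^{\top}\mtx{W}_2$ is a standard identity for linear networks.
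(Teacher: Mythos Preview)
Your proof is correct and shares the key structural ingredient with the paper: both arguments use that the balance $\mtx{W}_2^{\top}\mtx{W}_2 = \mtx{W}_1\mtx{W}_1^{\top}$ is conserved along gradient flow, so at the limit $\mtx{W}_1\mtx{W}_1^{\top}$ has rank at most one. Where you diverge is in how the first-order optimality is used. The paper computes the population least-squares solution explicitly via the normal equations, obtaining $\mtx{W}_2^*\mtx{W}_1^* \propto (\vct{v}^{\top},\,\vct{0})$, and then combines this with balance to write down $\mtx{W}_1^*$ and $\mtx{W}_2^*$ in closed form (as rank-one matrices with a shared singular vector), from which $\mtx{W}_n^*=\mtx{0}$ can be read off. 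You instead decompose the loss to get only the weaker fact $\mtx{W}_2\mtx{W}_n=\vct{0}$ at any global minimizer, and then use a clean PSD argument (rank-one $=$ sum of two PSD terms forces both into $\mathrm{span}(\mtx{W}_2^{\top})$) to kill $\mtx{W}_n$ directly. Your route is a bit more modular: it never needs the exact value of the optimal regressor on the signal block, only that the noise block must vanish, and the PSD step would extend verbatim to any distribution on $\vct{x}_n$ with mean zero and nondegenerate covariance. The paper's route gives more, namely the explicit form of $\mtx{W}_1^*,\mtx{W}_2^*$, at the cost of the extra computation.
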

The initialization assumption has been used in previous theoretical works (\citep{arora2018optimization}). Moreover, the convergence to a global minima is not restrictive as it has been shown that for linear networks all local minima are global minima (\citep{kawaguchi2016deep}).

In  general, this Theorem \ref{thm:magnitude_pruning} shows that the weights in $\mtx{W}_n$ are expected to be small, hence will be clipped by magnitude pruning.

\section{EXPERIMENTS}
\label{sec:experiments}

In this section we provide empirical evidence for the hypothesized behavior and weaknesses of DP-SGD as discussed above. We also empirically demonstrate the benefits of pruning.

\textbf{Datasets and Models. } All experiments in this section use the publicly available CIFAR10 and CIFAR100 datasets~\citep{krizhevsky2009learning} using a Resnet18 model architecture~\citep{he2016deep}. For DP-SGD, the Opacus library~\citep{yousefpour2021opacus} was used to replace batch normalization layers with group normalization layers, as is required to maintain privacy guarantees. Experiments were also performed on the LeNet (CNN)~\citep{lecun1998gradient} architecture to illustrate that behaviors persist across drastically different model sizes ($\approx 11$ million parameters for Resnet18 vs $\approx 50,000$ for LeNet). Please see the Appendix for further experiments.

\textbf{Training. } Models trained via SGD used batch size 128 and learning rate 0.1 without momentum or data augmentation. Models trained with DP-SGD used batch size 1024 and learning rate 0.5.  All models were trained using cross entroy loss on NVIDIA RT2080Ti 11GB GPUs. Reported results are averaged over 3-5 training runs of 30 epochs.  All DP-SGD guarantees are for $\delta=1\times 10^{-5}$. See the Appendix for detailed descriptions of hyperparameter choices.

\textbf{Pruning. } Magnitude pruning was used to prune each network. Given the initialization point of the model to be pruned, this model was then trained for 20 epochs using the pre-training datasets outlined in Figure~\ref{fig:test-accs-pruning}. For pruning level $p\in[0,1]$, the smallest $(1-p)$ proportion of weights by magnitude were pruned in each layer. The remaining weights were reset to their initialization values.

\subsection{Properties of DP-SGD, SGD and Loss basins in high dimensional spaces}

\subsubsection{Importance of early Vs late epochs}

To test the conjecture in~\citep{ganesh2023public} that the poor performance of DP-SGD arises in the early stages of optimization, we divide the training/optimization process into Phase 1 ($k$ epochs) and Phase 2 ($T-k$ epochs). Two models are trained using SGD and DP-SGD for Phase 1, and then separate copies of each are trained with SGD and DP-SGD for Phase 2.

Figure~\ref{fig:sgd-dpsgd-combos}(a) demonstrates the loss results for the CNN model. Clearly, the losses are determined by the mode of operation in Phase 2. The two Phase 2 sequences that use SGD converge rapidly to similar losses, while the SGD-DPSGD sequence has its loss increase to appear at a similar level as the DPSGD-DPSGD sequence. Corresponding test accuracy are shown in Figure~\ref{fig:sgd-dpsgd-combos}(b), and results for Resnet18 (Figure~\ref{fig:sgd-dpsgd-combos} c) and d)) show analogous patterns. These results show that contrary to the suggestion in~\citep{ganesh2023public}, the early phases are not entirely decisive, and the inability to find a good solution even in a good basin is a major weak point of DP-SGD.

\begin{figure}[t]
\centering
\begin{tabular}{cc}
\includegraphics[width=0.45\linewidth]{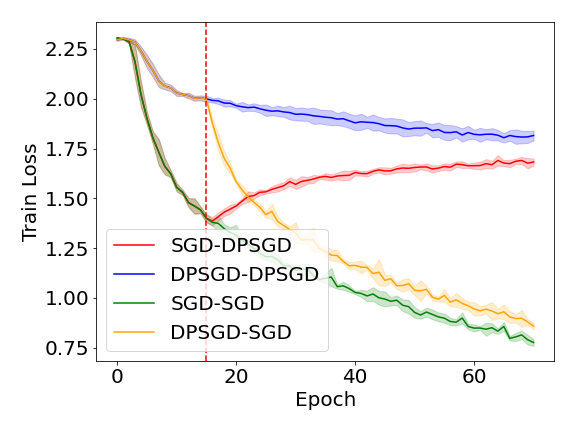} &
\includegraphics[width=0.45\linewidth]{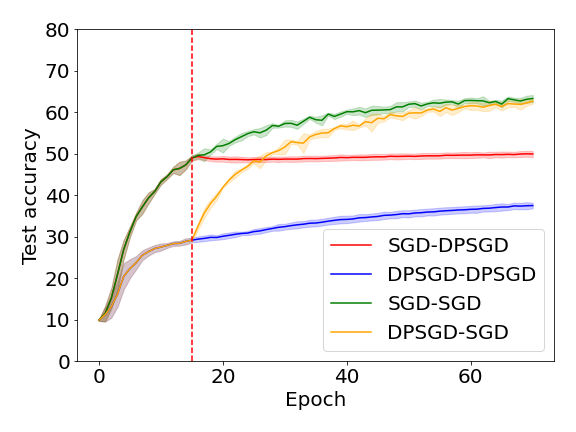}
\\
a) Train. Loss(CNN)  &b) Test Acc.(CNN)  \\
\includegraphics[width=0.45\linewidth]{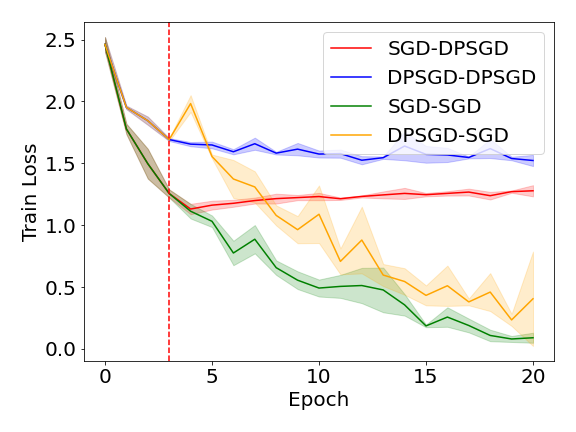} &
\includegraphics[width=0.45\linewidth]{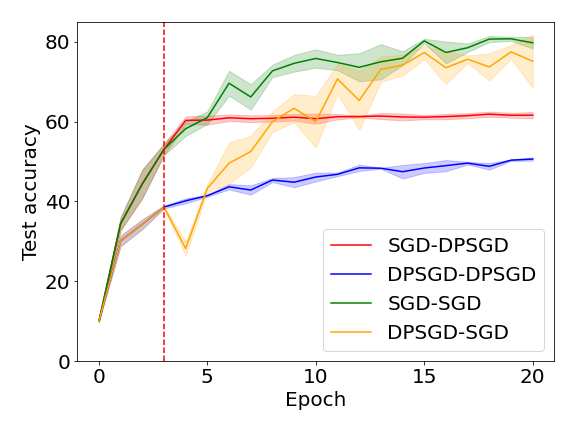}
\\
c) Train. Loss(Resnet18) &d) Test Acc.(Resnet18) \\
\end{tabular}
 \caption{Performance with different Phase 1 and Phase 2 training methods. Later training epochs determine the final performance of both models for CIFAR10. DP-SGD used noise multiplier $\sigma=0.55$ and maximum gradient norm $C=1.0$ in each epoch, with $\epsilon\approx 7$ after training. $k=15$ for CNN and $k=3$ for Resnet18.}
\label{fig:sgd-dpsgd-combos}
\end{figure}

\subsubsection{Loss basin dimensions}

By carrying out multiple runs of SGD we identify many basin locations with equal levels of loss at the basin floor. Figure~\ref{fig:sgd-linear-mode} shows that the linear mode interpolations between different SGD solutions contain a large barrier in between, suggesting that these are different loss basins, or parts of basins not linearly connected. The basin floors are at equally low loss levels, implying that there are many good solutions and initialization is not decisive.

To further evaluate the shape of the basins, we carried out experiments where two different models ($\theta_0, \theta_1$) were trained in Phase 2 starting from the same Phase 1 model with $k=5$. The linear mode interpolation profile between $\theta_0$ and $\theta_1$ shows an almost flat line, showing that the basin floor is almost flat (Figure~\ref{fig:random-vector-basin}). On the other hand, the interpolation in a random direction starting from $\theta_0$ shows the loss to rise rapidly. This was observed consistently on 100 random directions, where the maximum cosine similarity between the directions was $0.0012$, implying that the directions are mainly orthogonal, as expected when sampling random vectors in high dimensions.

This observation implies that the loss basin floor containing $\theta_0$ and $\theta_1$ is low dimensional compared to the ambient dimension (number of parameters). Any random vector -- such as the noise vector generated by DP-SGD --  from a model on the basin floor will take the process along a path of steep rise in loss. Note that the dimension of the loss basin itself is not critical to this observation, as long as a number of directions exist where loss increases rapidly.

\begin{figure}
    \centering
    \includegraphics[width=0.45\linewidth]{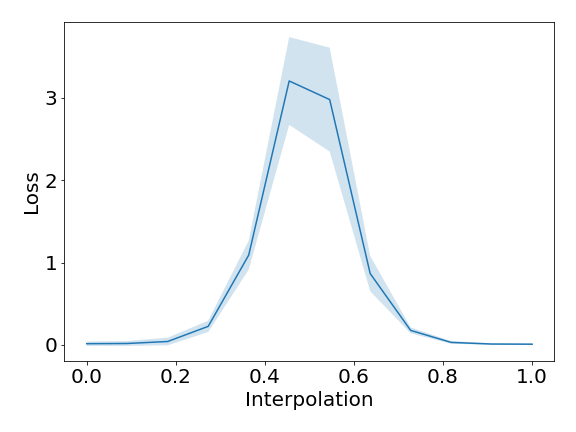}
    \caption{SGD models with different initialization have similar losses ($\theta_0$ at interpolation=0 and $\theta_1$ at interpolation=1) but are not linear mode connected, implying that they are in  different basins, or different regions of a non-convex basin.}
    \label{fig:sgd-linear-mode}
\end{figure}

\begin{figure}[h]
\centering
\includegraphics[width=0.45\linewidth]{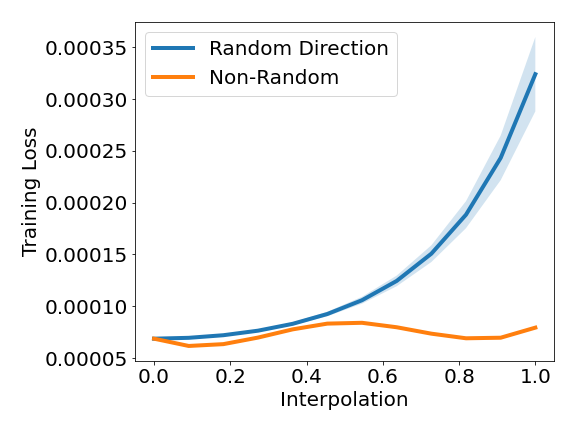}
 \caption{Comparison of the linear mode interpolation profiles of models $\theta_0, \theta_1$ in the same linearly connected basin at distance of 15 units, with a model obtained by moving from $\theta_0$ 15 units along a random vector in the ambient space. Evaluated with $100$ random vectors, with maximum cosine similarity of 0.0012. In random directions, loss increases rapidly.}
\label{fig:random-vector-basin}
\end{figure}

\subsubsection{Comparison of SGD and DP-SGD solutions with linear mode interpolations}

The linear interpolation instability for pairs of SGD models trained from the same Phase 1 model (for varying $k$) is shown in Figure~\ref{fig:linear-mode-sgd-dpsgd-comparison}(a). Clearly, SGD finds its basin early. However, DP-SGD shows high values of instability, implying that two runs of DP-SGD almost never end in the same linear basin, except perhaps for high values of both $k$ and $\epsilon$. When the two models are trained by SGD and DP-SGD respectively in Phase 2, the differences in loss profile can be seen in Figure~\ref{fig:linear-mode-sgd-dpsgd-interpolation} -- suggesting that the two models do not end in the same basin unless $\epsilon$ and $k$ are large. The corresponding results where Phase 1 uses DP-SGD are shown in Figure\ref{fig:linear-mode-sgd-dpsgd-interpolation-dpsgd-first} In this case, since the starting point is at a high loss and far from the basin floor, the Phase 2 DP-SGD always ends in a different basin.

In fact, as shown in Figure~\ref{fig:comp-to-pretrained}, If we interpolate the Phase 1 model with Phase 2 model, we see that in all cases, DP-SGD does not stay in the same basin as in Phase 1 ($k=10$), and always ends at a different basin.

\begin{figure}[h]
\centering
\begin{tabular}{cc}
\includegraphics[width=0.45\linewidth]{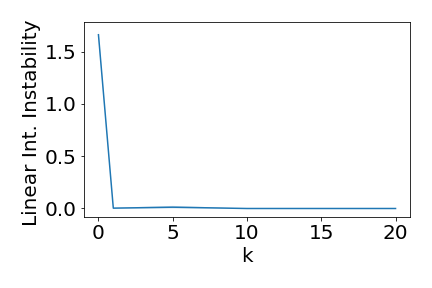} &
\includegraphics[width=0.45\linewidth]{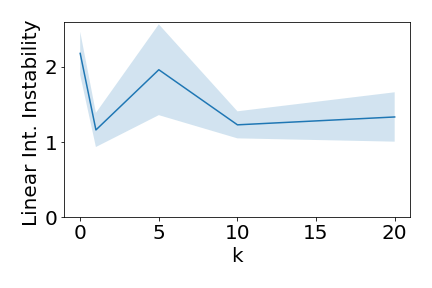}
\\
a) SGD  &b) DP-SGD ($\epsilon$=1)  \\
\includegraphics[width=0.45\linewidth]{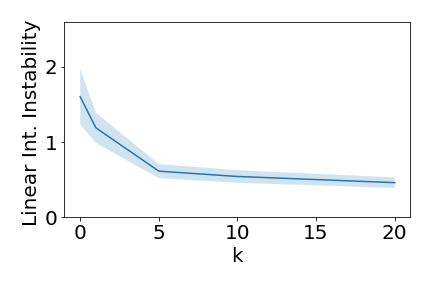} &
\includegraphics[width=0.45\linewidth]{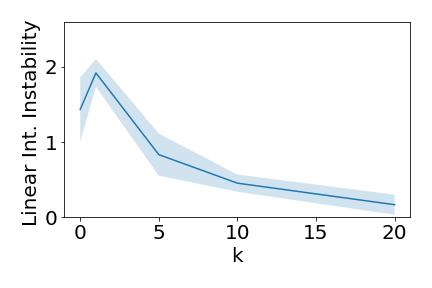}
\\
c) DP-SGD ($\epsilon$=5)  &d) DP-SGD ($\epsilon$=50)  \\
\end{tabular}
 \caption{SGD and DP-SGD Instability. Linear interpolation stability is shown using 30 values of $\alpha$ between two models trained with different copies after Phase 1. Higher y-values represent less linear mode stability.}
\label{fig:linear-mode-sgd-dpsgd-comparison}
\end{figure}

\begin{figure}[h]
\centering
\begin{tabular}{cc}
\includegraphics[width=0.45\linewidth]{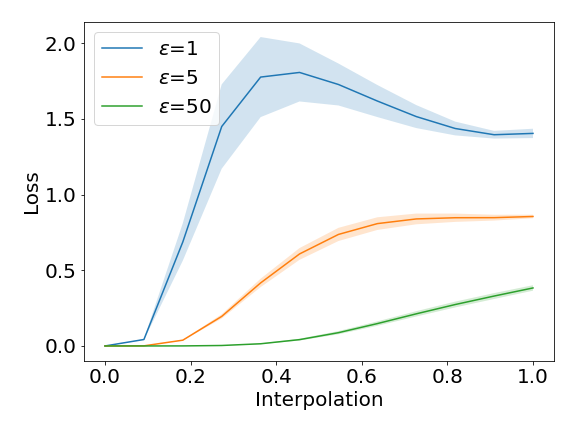} &
\includegraphics[width=0.45\linewidth]{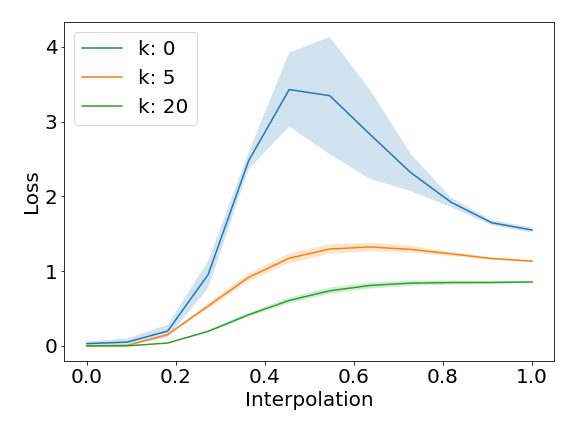}
\\
a) k=20  &b) $\epsilon=5$ \\
\end{tabular}
 \caption{Linear model interpolation between models $\theta_0$ trained for 20 epochs with SGD in Phase 2 and $\theta_1$ trained for 20 epochs with DP-SGD in Phase 2. Phase 1 was trained with SGD.}
\label{fig:linear-mode-sgd-dpsgd-interpolation}
\end{figure}

\begin{figure}[h]
\centering
\begin{tabular}{cc}
\includegraphics[width=0.45\linewidth]{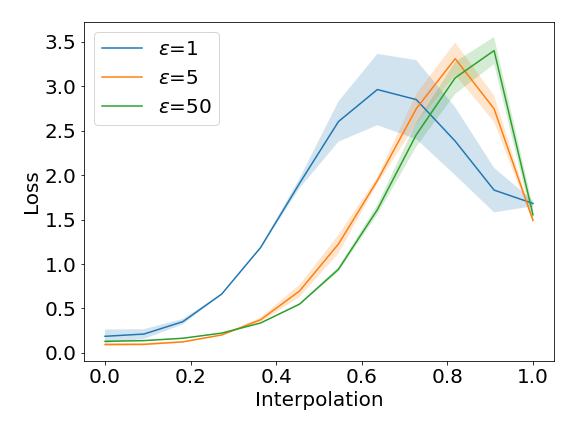} &
\includegraphics[width=0.45\linewidth]{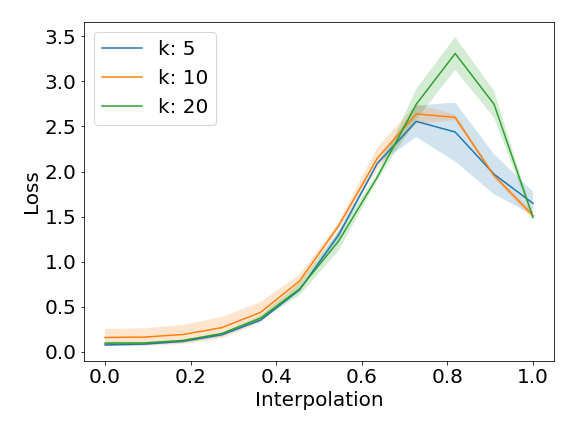}
\\
a) k=20  &b) $\epsilon=5$ \\
\end{tabular}
 \caption{Linear mode interpolation. $\theta_0$ trained for 20 epochs with SGD in Phase 2, and $\theta_1$ trained for 20 epochs with DP-SGD in Phase 2. Phase 1 trained with DP-SGD.}
\label{fig:linear-mode-sgd-dpsgd-interpolation-dpsgd-first}
\end{figure}

\begin{figure}[h]
\centering
\begin{tabular}{cc}
\includegraphics[width=0.45\linewidth]{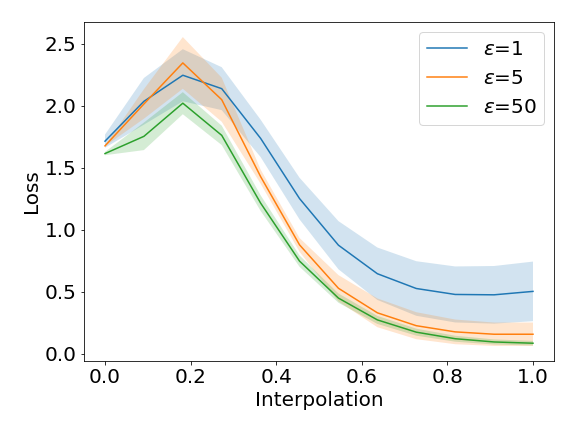} &
\includegraphics[width=0.45\linewidth]{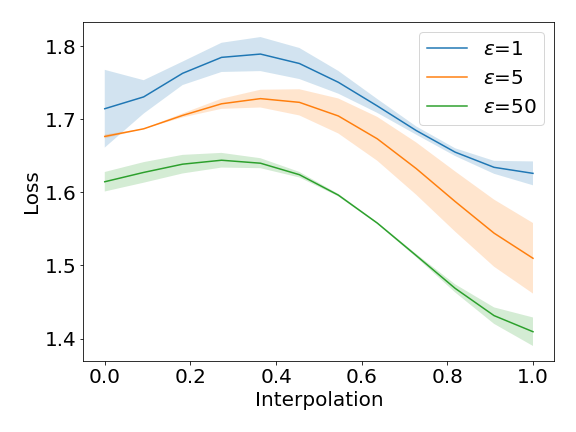}
\\
a) P1: DPSGD; P2: SGD  &b) P1 \& 2: DP-SGD \\
\includegraphics[width=0.45\linewidth]{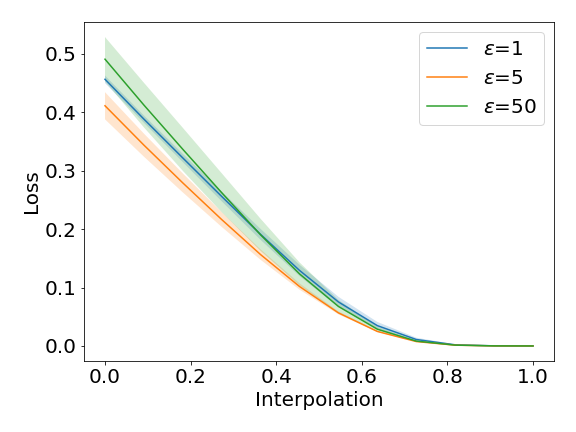} &
\includegraphics[width=0.45\linewidth]{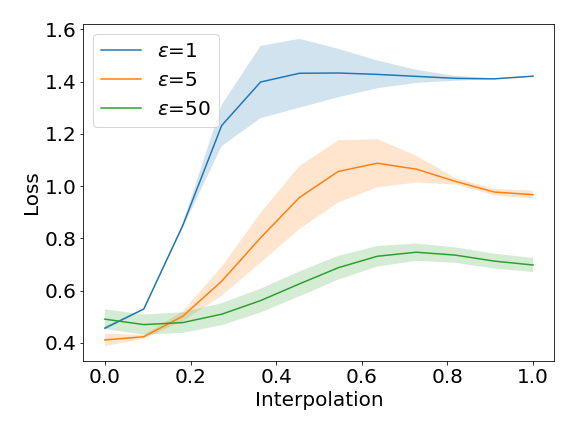}
\\
c) P1 \& 2: SGD   &d) P1: SGD; P2: DPSGD \\
\end{tabular}
 \caption{Linear mode interpolation. $\theta_0$ is model at end of Phase 1; $\theta_1$ is model at end of phase 2. SGD Phase 1 followed by also SGD Phase 2 is the only setting where the model clearly stays on the same optimization basin.}
\label{fig:comp-to-pretrained}
\end{figure}
\subsubsection{Comparing Noise and Clipping}
We now examine the effects of clipping and noise separately, observing that noise alone does not degrade model performance as larger gradients can be used to recover the unwanted deviations, whereas clipping impacts model performance by slowing progress. The combination of both clipping and noise in DP-SGD is the worst case scenario, where clipping inhibits recovery from the perturbation created by noise.

In Figure~\ref{fig:five-step-loss-gn}, we observe the loss and gradient norm for clipped SGD, noisy SGD, DP-SGD and SGD, both at model initialization and later in model training. For a fixed training batch $B$, we take $t=5$ steps with clipping set to $C=1.0$, learning rate $0.1$, batch size $128$ and a noise scale of $0.4$. The mean and average values over batches of size $128$ is reported. At initialization in Figure~\ref{fig:five-step-loss-gn} a), SGD achieves the lowest loss. Both clipping and noise reduce loss improvement, but DP-SGD has significantly worse performance than any other method. The gradient norms in Figure~\ref{fig:five-step-loss-gn} b) shed light on this, with low noisy and DP-SGD gradient norms suggesting that these methods have not located a basin to optimize within quickly.

\begin{figure}[h]
\centering
\begin{tabular}{cc}
\includegraphics[width=0.45\linewidth]{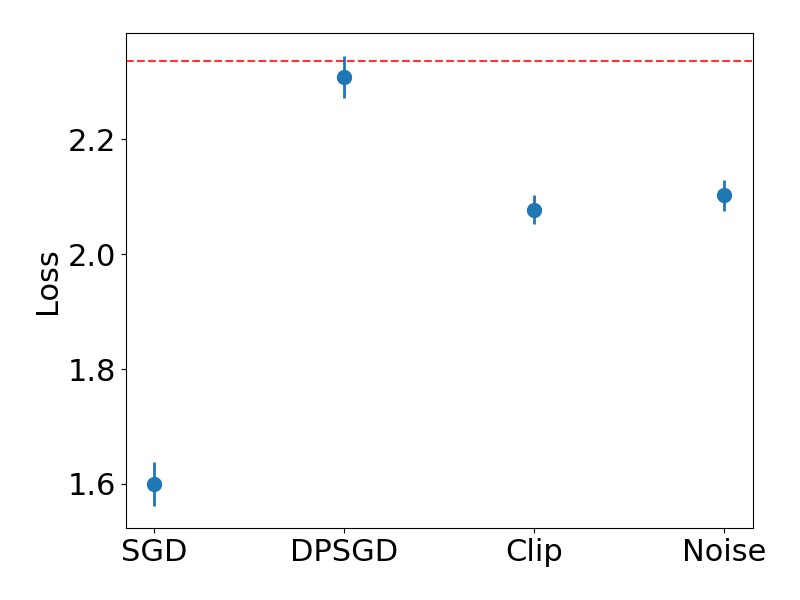} &
\includegraphics[width=0.45\linewidth]{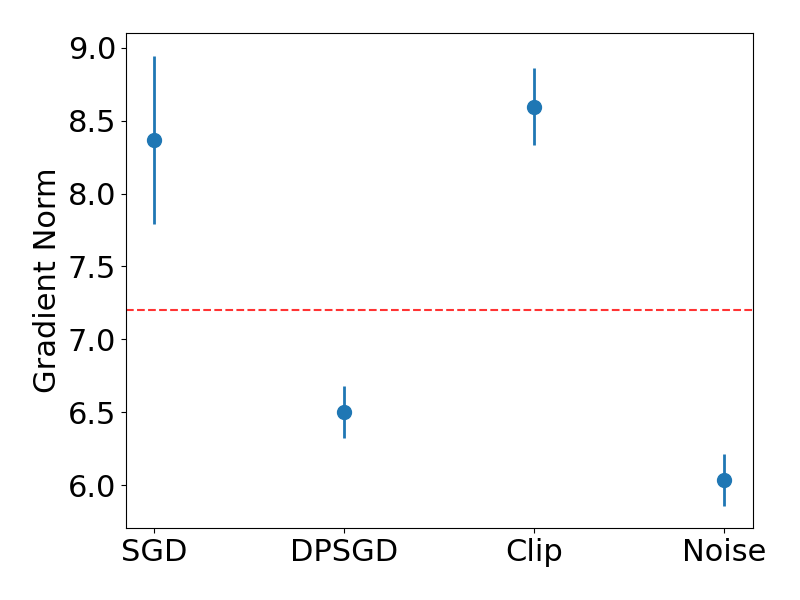}
\\
a) Loss  &b) Gradient Norm   \\
\includegraphics[width=0.45\linewidth]{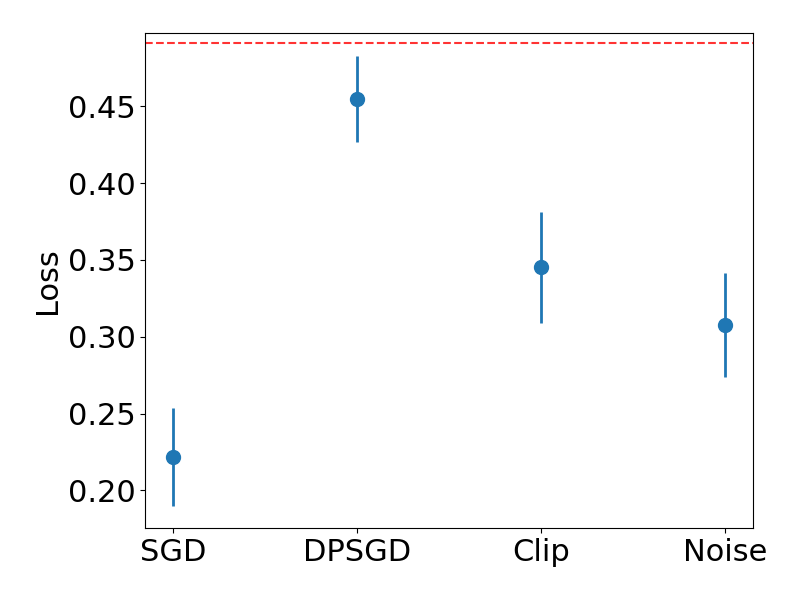} &
\includegraphics[width=0.45\linewidth]{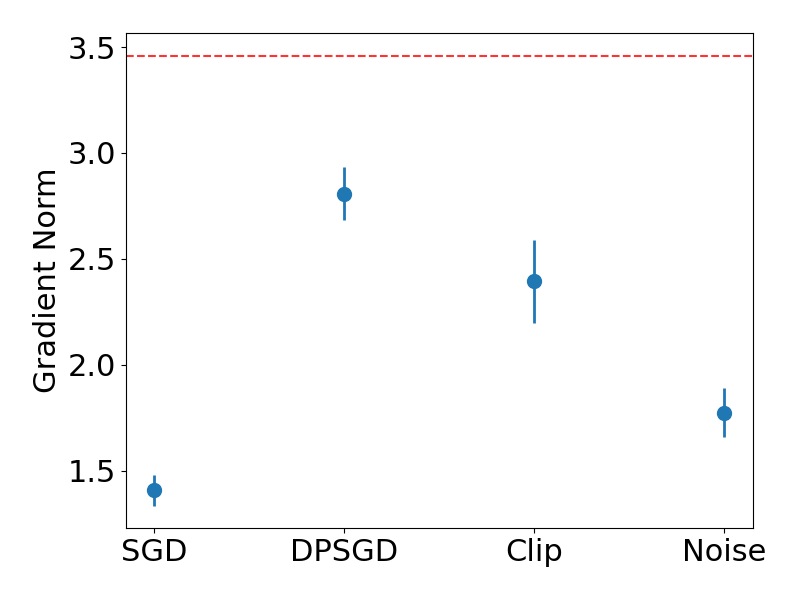}
\\
c) Loss  &d) Gradient Norm   \\
\end{tabular}
 \caption{Training loss and gradient norms after 5 training steps with a fixed training batch of size 128. The value before training is shown by the red dashed line. The top row starts training from initialization, the bottom row begins training from a model that has been trained for 10 epochs with SGD.}
\label{fig:five-step-loss-gn}
\end{figure}

When we instead begin from a model pre-trained for 10 epochs with SGD, Figure~\ref{fig:five-step-loss-gn} c) shows similar loss trends to an untrained model, but the gradient norm behavior in Figure~\ref{fig:five-step-loss-gn} d) differs. In this case, the gradient norm is largest for DP-SGD and the clipped model, with all variants having a larger gradient norm than SGD. This suggests that clipping as well as noise contributes to gradient norm increases over training, and therefore increased distortion of the true gradient at each step.  See the Appendix for a comparison of gradient norms throughout training for DP-SGD vs SGD, the larger gradient norms observed in Figure~\ref{fig:five-step-loss-gn} persist.

Figure~\ref{fig:clip-and-noise} shows the accuracy of clipping with maximum gradient norm $1.0$ and noise addition with scale $2.4$, settings equivalent to those for DP-SGD with $\epsilon=1$. If we directly compare the effects of noise and clipping for a given learning rate, batch size and number of epochs, then clipping significantly impacts training performance by slowing progress when gradient norms increase after the first few epochs, whereas noisy SGD attains the same final test accuracy as SGD. This further supports the idea that clipping plays a significant role in preventing DP-SGD from recovering from the required noise addition that would otherwise not be detrimental to performance.

\begin{figure}[htbp!]
\centering
\begin{tabular}{cc}
\includegraphics[width=0.45\linewidth]{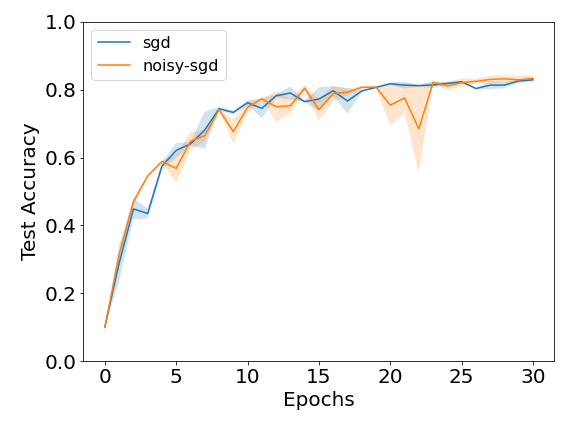} &
\includegraphics[width=0.45\linewidth]{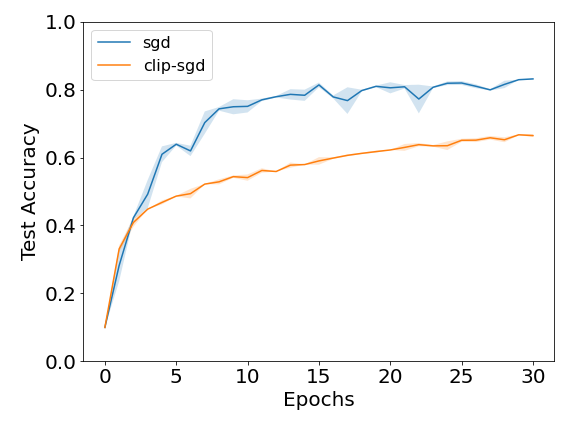}
\\
a) Noisy SGD & b) Clipped SGD
\end{tabular}
 \caption{Noisy-SGD Accuracy test accuracy comparison with $\sigma=2.4$ and Clipped-SGD with maximum gradient norm $1.0$ for Resnet18 with Cifar10.}
\label{fig:clip-and-noise}
\end{figure}

\subsection{Pruning}

Given the hypothesized benefits of pruning in both reducing the random noise added during DP-SGD training and reducing the impact of clipping, we now test these hypotheses empirically. In Figure~\ref{fig:theory-exp} we confirm that in the setting described in Section~\ref{sec:theory_example}, pruning reduces the ratio $R$, better aligning the clipped gradient
with the original gradient direction.

\begin{figure}[h]
\centering
\includegraphics[width=0.45\linewidth]{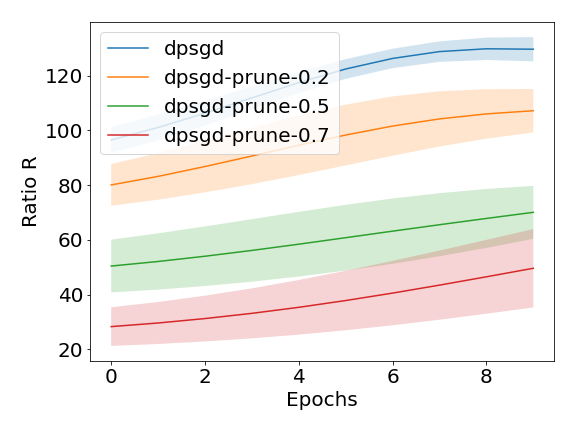}
 \caption{Ratio $R$ reduced by pruning.}
\label{fig:theory-exp}
\end{figure}

This behavior is further evidenced in Figure~\ref{fig:clip-noise-only} a) which demonstrates how pruning alleviates the reduction in accuracy caused by clipping SGD for Resnet18, even when there is no noise present. In contrast, Figure~\ref{fig:clip-noise-only} b) shows that pruning can help the network to train faster for noisy SGD on Resnet18 but not attain higher test accuracy, note that pruning also helps SGD without noise train more quickly and therefore this behavior is not unique to noisy SGD.

\begin{figure}[htbp!]
\centering
\begin{tabular}{cc}
\includegraphics[width=0.45\linewidth]{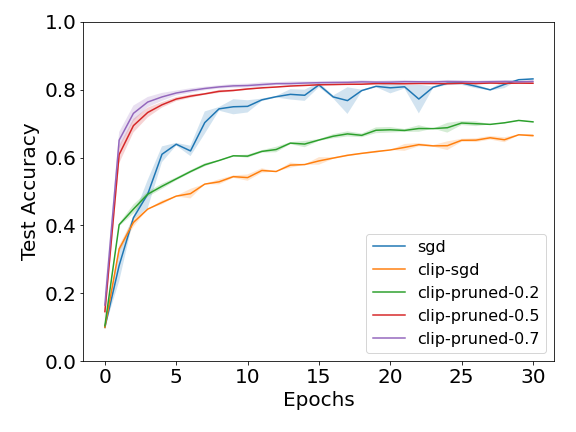} &
\includegraphics[width=0.45\linewidth]{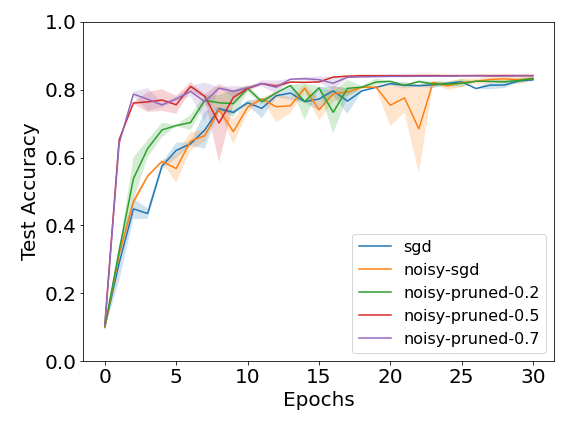}
\\
a) & b)\\

\end{tabular}
 \caption{a) Clipped-SGD Test Accuracy comparison with $C=1.0$ for Resnet18 with Cifar10 pre-training. This clipping scale corresponds to the maximum gradient norm used for $\epsilon=1$ and $\epsilon=2$ experiments. b) Noisy-SGD Accuracy comparison with $\sigma=2.4$. This noise scale corresponds to the noise level for $\epsilon=1$. Training accuracy plots show the same trends.}
\label{fig:clip-noise-only}
\end{figure}

\begin{figure}[t]
\centering
\begin{tabular}{cc}
\includegraphics[width=0.45\linewidth]{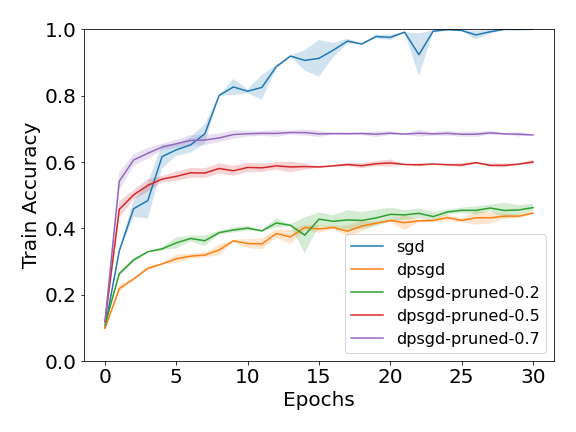} &
\includegraphics[width=0.45\linewidth]{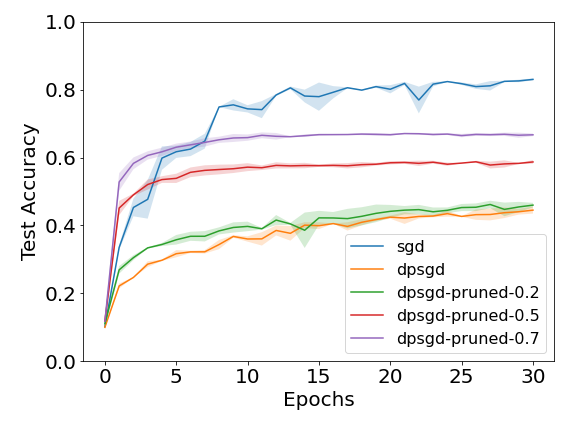}
\\
a) & b) \\
\includegraphics[width=0.45\linewidth]{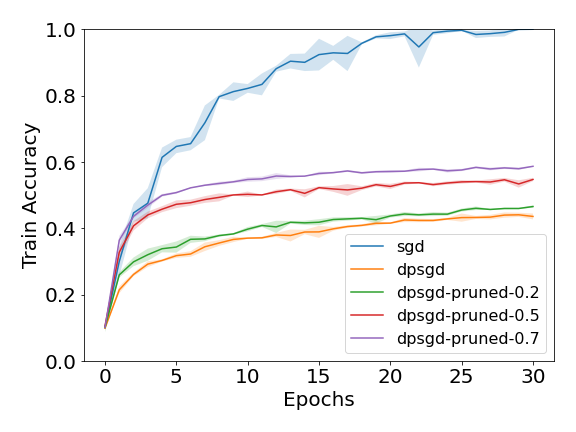} &
\includegraphics[width=0.45\linewidth]{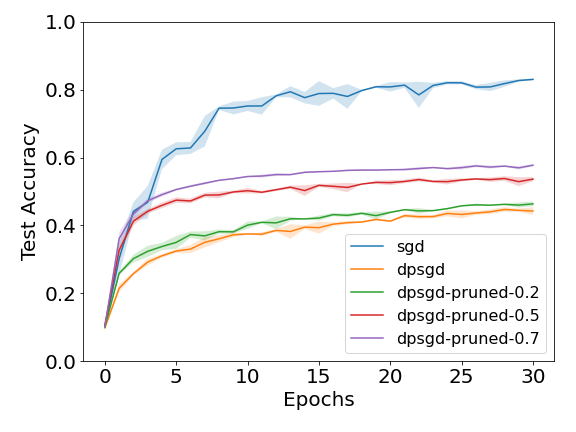}
\\
c) & d) \\
\includegraphics[width=0.45\linewidth]{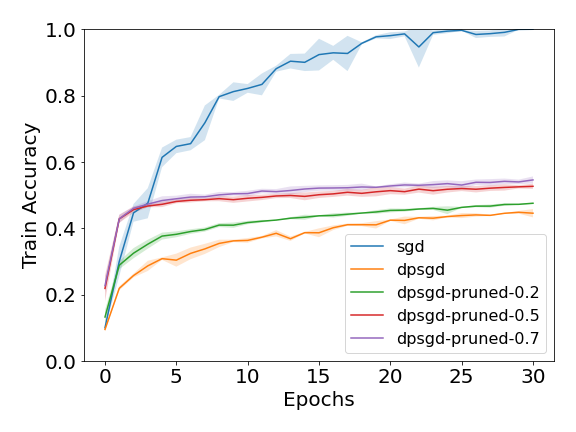} &
\includegraphics[width=0.45\linewidth]{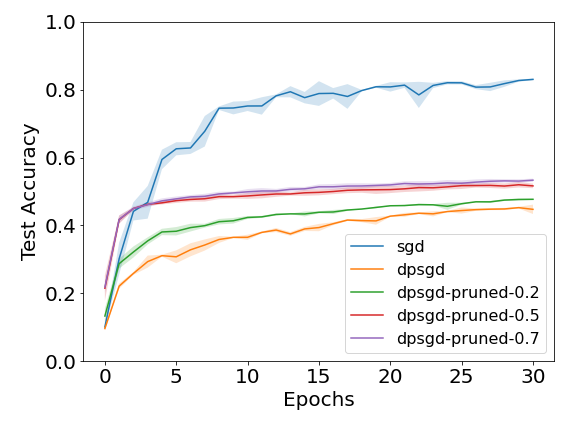}
\\
e) & f) \\
\end{tabular}
 \caption{DP-SGD Test Accuracy comparison with $\epsilon=1$ for Resnet18. Top row: magnitude pruning via non-private Cifar10 Phase 1. Middle row: magnitude pruning via non-private Cifar10 using 5$\%$ of Cifar10. Bottom row: magnitude pruning via Cifar100.}
\label{fig:test-accs-pruning}
\end{figure}

Given, these results, the  dramatic improvement in test accuracy for DP-SGD with pruning shown Figure~\ref{fig:test-accs-pruning}is unsurprising, demonstrating that pruning increases DP-SGD accuracy, with the same trend as Figure~\ref{fig:theory-exp} where 0.7 pruning has the strongest performance. The large improvement of around $23\%$ seen in Figure~\ref{fig:test-accs-pruning} a) and b) using non-private pruning implies that if pruning can be effectively implemented in a privacy preserving manner the benefits of pruning can be significant. Figure~\ref{fig:test-accs-pruning} e) and f) shows an example of how to achieve this, by pruning using  Cifar100. See~\citep{luo2021scalable} for further evidence of the potential test accuracy benefits of pruning for DP-SGD test accuracy. Another notable aspect addressed by pruning is that of DP-SGD trained models residing significantly further from the origin than SGD models (e.g. $1.7\times$). In the Appendix we see that pruning reduces this difference, with parameter distributions for pruned DP-SGD models more closely replicating those of SGD models. See the Appendix for further results, including ablation studies.

\section{RELATED WORKS}
\label{sec:related}

Since its initial proposal, Differentially Private Stochastic Gradient Descent has been studied for its convergence properties~\citep{abadi2016deep,bassily2014private}. More recent theoretical works have studied clipping for its impact on convergence~\citep{xiao2023theory,chen2020understanding}.
Theoretical  approaches of sparsifying gradients to improve the performance of DPSGD have been developed, where sparsification is carried out individually for each gradient at runtime~\citep{zhou2021bypassing,yu2021not}. In contrast to gradient sparsification, pruning can be viewed as a one time pre-sparsification of both the network and gradients. Pruning of the normalization layers of Resnet has been found to improve performance of DPSGD~\citep{luo2021scalable}.

In contrast to these works, we use mechanisms such as Linear mode connectivity~\citep{frankle2020linear} to study the loss basins and make fine grained comparisons of DPSGD models with SGD models. We demonstrate the individual impacts of clipping and noise and the low dimensionality of loss basins, and provide theoretical justifications for pruning.

\section{CONCLUSION}

This paper has provided several new insights theoretically and experimentally into the operation of DP-SGD. While it does not quite solve the problem of  training private models that are almost as good as non-private ones, the insights from this research should help us to develop better private training algorithms in future.

\bibliography{references}

\onecolumn
\section{Appendix}
\begin{proof}[Proof of Theorem \ref{thm:clipped_grad_alignment}]
Decompose $\vct{g}_i = a_i \frac{\bar{\vct{g}}}{\|\bar{\vct{g}}\|} + b_i \vct{u}_i$, where $\vct{u}_i$ is a unit vector orthogonal to $\bar{\vct{g}}$. Then the desired inequality becomes
\begin{align}
    \frac{1}{n} \sum_{i=1}^n \frac{a_i}{\sqrt{a_i^2 + b_i^2}} \geq 1 - \frac{\frac{1}{n}\sum_{i=1}^n a_i^2 + b_i^2 - \bar{a}^2}{2\bar{a}^2}
\end{align}
where $\bar{a}$ is shorthand for $\frac{1}{n} \sum_{i=1}^n a_i$, and we know $\bar{a} > 0$.

Define the function $f$ by
\begin{align}
    f(a_1, \dots, a_n, b_1, \dots, b_n) &= \frac{2}{n}  \sum_{i=1}^n \frac{a_i}{\sqrt{a_i^2 + b_i^2}} + \frac{1}{n \bar{a}^2} \left(\sum_{i=1}^n a_i^2 + b_i^2\right)
\end{align}

For any $0 < \alpha < 3\alpha < \beta, 0 < \gamma$, let $J_\alpha = \{(a_1, \dots, a_n, b_1, \dots, b_n) \in \R^{2n}:  \bar{a} \geq \alpha\}$ and let $I_{\alpha, \beta, \gamma} = J_\alpha \cap \left(([-\beta, -\alpha] \cup [\alpha, \beta])^n \times [-\gamma, \gamma ]^n\right)$. We claim that $f \geq 3$ on $I_{\alpha, \beta, \gamma}$. Since $I_{\alpha, \beta, \gamma}$ is compact, $f$ has a minimum on $I_{\alpha, \beta, \gamma}$. It suffices to consider the critical points $f$ and the boundary of $I_{\alpha, \beta, \gamma}$.

We first find the critical points of $f$. For such critical points, calculate that
\begin{align}
    \frac{\partial f}{\partial a_j} &= \frac{2}{n} \frac{b_j^2}{(a_j^2 + b_j^2)^{\frac{3}{2}}} + \frac{2a_j}{n\bar{a}^2} - \frac{2}{n^2\bar{a}^3} \left(\sum_{i=1}^n a_i^2 + b_i^2\right) = 0  \label{df_da} \\
    \frac{\partial f}{\partial b_j} &= -\frac{2}{n} \frac{a_jb_j}{(a_j^2 + b_j^2)^{\frac{3}{2}}} + \frac{2 b_j}{n \bar{a}^2} = 0 \label{df_db} \
\end{align}
Suppose that $b_{k_1} = b_{k_2} = 0$ for some $k_1, k_2$. Then we must have
\begin{align}
   \frac{2a_{k_1}}{n\bar{a}^2} - \frac{2}{n^2\bar{a}^3} \left(\sum_{i=1}^n a_i^2 + b_i^2\right) = \frac{2a_{k_2}}{n\bar{a}^2} - \frac{2}{n^2\bar{a}^3} \left(\sum_{i=1}^n a_i^2 + b_i^2\right)
\end{align}
which implies that $a_{k_1} = a_{k_2} = A_1$ for some constant $A_1$.

Suppose that for some $k, b_k \neq 0$. Then equation \ref{df_db} implies that
\begin{equation}
    \label{norm_orig}
    \bar{a}^2 a_k = (a_k^2 + b_k^2)^{\frac{3}{2}}
\end{equation}
which also implies
\begin{equation}
    b_k^2 = \bar{a}^{\frac{4}{3}}a_k^{\frac{2}{3}} - a_k^2
\end{equation}
Substituting into Equation \ref{df_da},
\begin{align}
    0 &= \frac{2}{n} \cdot \frac{\bar{a}^{\frac{4}{3}} a_k^{\frac{2}{3}} - a_k^2}{\bar{a}^2 a_k} + \frac{2a_k}{n\bar{a}^2} - \frac{2}{n^2\bar{a}^3} \left(\sum_{i=1}^n a_i^2 + b_i^2\right) \\
    &= \frac{2}{n \bar{a}^2}\left( \frac{\bar{a}^{\frac{4}{3}} a_k^{\frac{2}{3}}}{a_k} \right) - \frac{2}{n^2\bar{a}^3} \left(\sum_{i=1}^n a_i^2 + b_i^2\right) \\
    &= \frac{2}{n \bar{a}^{\frac{2}{3}} a_k^{\frac{1}{3}}} - \frac{2}{n^2\bar{a}^3} \left(\sum_{i=1}^n a_i^2 + b_i^2\right)
\end{align}
It follows that for any $k_1, k_2$ with $b_{k_1}, b_{k_2} \neq 0$, we must have $a_{k_1} = a_{k_2} = A_2$ for some constant $A_2$. Moreover, we must have $\frac{2}{n \bar{a}^{\frac{2}{3}} A_2^{\frac{1}{3}}} = \frac{2A_1}{n\bar{a}^2}$ which gives
\begin{equation} \label{dual_relation}
   A_1^3 A_2 = \bar{a}^4
\end{equation}

Consider two cases. If all the $a_i$ are equal, then $a_i = A_1 = A_2 = \bar{a} > 0$, and also
\begin{equation}
    b_i^2 = \bar{a}^{\frac{4}{3}}a_i^{\frac{2}{3}} - a_i^2 = 0,
\end{equation}
so $b_i = 0$ for all $i$. One can check that $f$ at such a point is equal to 3.

Now suppose for the sake of contradiction not all the $a_i$ are equal. The one of them, WLOG let it be $a_1$, satisfies $a_1 > \bar{a}$. Then $a_1 \bar{a}^2 < (a_1^2 + b_1^2)^{\frac{3}{2}}$, so the only way Equation \ref{df_db} is satisfied is if $b_1 = 0$. But then $a_1 = A_1$. It follows from Equation \ref{dual_relation} $0 < A_2 < \bar{a} < A_1$.

Now let $n_1$ be the number of $a_i$ equal to $A_1$ and $A_2$. We must have
\begin{align}
    \bar{a} &= \frac{n_1}{n} A_1 + (1 - \frac{n_1}{n})A_2 \\
    &= \frac{n_1}{n} A_1 + (1 - \frac{n_1}{n}) \frac{\bar{a}^4}{A_1^3}
\end{align}
Set $\lambda = \frac{A_1}{\bar{a}} > 1$. Then
\begin{align}
    1 &= \frac{n_1}{n} \lambda + (1 - \frac{n_1}{n}) \frac{1}{\lambda^3} \\
    (\lambda^3 - 1) &= (\lambda^4 - 1) \frac{n_1}{n} \\
    \frac{n_1}{n} &= \frac{\lambda^3 - 1}{\lambda^4 - 1}
\end{align}
Now evaluating $f$,
\begin{align}
    f(a_1, \dots, a_n, b_1, \dots, b_n) &= \frac{2}{n}\left(n_1 + n_2\frac{A_2}{A_2^{\frac{1}{3}} \bar{a}^{\frac{2}{3}}} \right) + \frac{1}{n \bar{a}^2} \left(n_1A_1^2 + A_2^{\frac{2}{3}} \bar{a}^{\frac{4}{3}}\right) \\
    &= 2\left(\frac{\lambda^3 - 1}{\lambda^4 - 1} + (1 - \frac{\lambda^3 - 1}{\lambda^4 - 1})\frac{1}{\lambda^2}\right) + \left(\frac{\lambda^3 - 1}{\lambda^4 - 1}\lambda^2 + (1 - \frac{\lambda^3 - 1}{\lambda^4 - 1})\frac{1}{\lambda^2}\right)
\end{align}
It is not hard to show that the RHS is greater than $3$ for $\lambda > 1$. Thus we have shown that $f \geq 3$ at all critical points.

It remains to check the boundary points. Notice that the dominant term in $f $ as $|b_i| \to \infty$ is $b_i^2$. Hence using sufficiently large $\gamma$ guarantees that no point with $b_i = \gamma$ for some $i$ can be a minimum. On the other hand, suppose of the $a_i$ take a boundary value. We consider a few cases:

Case 1: $a_i = -\beta$. If $b_i \neq 0$, then $\frac{\partial f}{\partial b_i} \neq 0$, so we can adjust $b_i$ to decrease $f$. If $b_i = 0$, then \ref{df_da} shows that increasing $a_i$ will decrease the value of $f$.

Case 2: If the only boundary values are of the form $a_i = \pm \alpha$ observe that $f$ is scale-invariant, so scaling all inputs by $1 + \delta$ shows that $f$ has the same value at some interior point of $I_{\alpha, \beta, \gamma}$, which we already know is not a minimum or greater than 3.

Case 3: $a_i$ on the boundary are only of the form $a_i = \beta$. A similar argument as the previous case but scaling down works.

Case 4: Some $a_i = \alpha$ and some $a_j = \beta$. But then from Equation \ref{df_da} we see that $\frac{\partial f}{\partial a_i} < \frac{\partial f}{\partial a_j}$, so either $\frac{\partial f}{\partial a_i} < 0$ or $\frac{\partial f}{\partial a_j} > 0$. This means that either increasing $a_i$ or decreasing $a_j$ decreases the value of $f$, so this cannot be a minimum.

Case 5: Some $a_i = -\alpha$ and some $a_j = \beta$. If $b_i \neq 0$ then $\frac{\partial f}{\partial b_i} \neq 0$ by \ref{df_db} and we can adjust $b_i$ appropriately to decrease $f$. A similar argument handles the case that $b_j \neq 0$.

Suppose $b_i = b_j = 0$. Set $a_i' = \alpha, a_j' = \beta - 2 \alpha$. Then
\begin{align}
    f(a_1 ,\dots, a_i', \dots, a_j', \dots a_n, b_1, \dots, b_n) - f(a_1, \dots, a_n, b_1, \dots b_n) &= \frac{2}{n}(1 - (-1)) + \frac{1}{n \bar{a}^2}((\beta - 2\alpha)^2 - \beta^2)  \\
    &= \frac{4}{n}\left(1 - \frac{\beta(\beta - \alpha)}{\bar{a}^2}\right)
\end{align}
Taking $\beta > n\alpha$ guarantees that $\beta > \beta - \alpha > \bar{a}$, so for $\beta$ sufficiently large the RHS is less than 0, showing that the original point is not a minimum.

The final boundary condition is if $\bar{a} = \alpha$. We have already considered the condition where any of the $a_i$ are equal to $\pm \alpha$, so the only way this is possible if there exists some $i$ s.t. $a_i < -\alpha$. If $b_i \neq 0$, we can decrease $|b_i|$ to decrease $f$ by \ref{df_db}. If $b_i = 0$, then \ref{df_da} shows that increasing $\alpha_i$ will decrease the value  of $f$. Thus such a point cannot be a minimum. This proves that $f \geq 3$ on $I_{\alpha, \beta, \gamma}$.

Now since $\alpha, \beta, \gamma$ work for any $\beta, \gamma$ sufficiently large, we conclude that $f \geq 3$ on $\{(a_1, \dots, a_n, b_1, \dots, b_n) \in \R^{2n}:  \bar{a} > 0\} \cap ((\R \setminus \{0\})^n \times \R^n)$. We can extend to the possibility that some of the $a_i$ are zero by continuity. This completes the proof.

\end{proof}

\begin{proof} [Proof of \ref{thm:magnitude_pruning}]
Collect the input examples into a matrix $\mtx{X} = [x^{(1}, \dots, x^{(n)}]$ and let $\mtx{Z} = f(\mtx{X}) \in \R^{1 \times n}$ be the matrix of model outputs on the dataset. Then the gradients are
\begin{align*}
    \frac{\partial \mathcal{L}}{\partial \mtx{W}_1} &= \mtx{W}_2^{\top} \frac{\partial \mathcal{L}}{\partial \mtx{Z}} \mtx{X}^{\top} \\
    \frac{\partial \mathcal{L}}{\partial \mtx{W}_2} &= \frac{\partial \mathcal{L}}{\partial \mtx{Z}} \mtx{X}^{\top} \mtx{W}_1^{\top} \\
\end{align*}
In addition, observe that using the rule of gradient flow,
\begin{align*}
    \frac{d}{dt} (\mtx{W}_1 \mtx{W}_1^{\top}) &= \mtx{W}_1 \left(\frac{d \mtx{W}_1}{\partial t}\right)^{\top} + \frac{d \mtx{W}_1}{\partial t} \mtx{W}_1^{\top} \\
    &= -\mtx{W}_1 \left(\frac{\partial \mathcal{L}}{\partial \mtx{W}_1}\right)^{\top} - \frac{\partial \mathcal{L}}{\partial \mtx{W}_1} \mtx{W}_1^{\top} \\
    \frac{d}{dt} (\mtx{W}_2^{\top} \mtx{W}_2) &= \mtx{W}_2 \left(\frac{d \mtx{W}_2}{\partial t}\right)^{\top} + \frac{d \mtx{W}_2}{\partial t} \mtx{W}_2^{\top} \\
    &= -\left(\frac{\partial \mathcal{L}}{\partial \mtx{W}_2}\right)^{\top} \mtx{W}_2 - \mtx{W}_2^{\top} \frac{\partial \mathcal{L}}{\partial \mtx{W}_2}.
\end{align*}
Substituting, we see that $\frac{d}{dt} (\mtx{W}_1 \mtx{W}_1^{\top}) = \frac{d}{dt} (\mtx{W}_2^{\top} \mtx{W}_2)$. Since we assumed that that $\mtx{W}_1 \mtx{W}_1^{\top} = \mtx{W}_2^{\top} \mtx{W}_2$, it follows that equality holds throughout training.

Now the model is linear in $\mtx{W} = \mtx{W}_2 \mtx{W}_1$, so by well known results on linear regression, the optimal model $\mtx{W}_2^* \mtx{W}_1^*$ on the entire dataset satisfies
\begin{align*}
    \mtx{W}_2^* \mtx{W}_1^* &= \E_{\vct{x} \sim \mathcal{D}} [y\vct{x}^{\top}] \left(\E_{\vct{x} \sim \mathcal{D}} [\vct{x}\vct{x}^{\top}]\right)^{\dagger} \\
    &= \vct{v}^{\top} (\vct{v}\vct{v}^{\top} + \sigma^2 \mtx{I})^{\dagger} \\
    &= \frac{\|\vct{v}\|}{\|\vct{v}\|^2 + \sigma^2} \vct{v}^{\top}
\end{align*}
Combined with the fact that $\mtx{W}_1^* (\mtx{W}_1^*)^{\top} = (\mtx{W}_2^*)^{\top} \mtx{W}_2^*$, it follows that $\mtx{W}_1^*, \mtx{W}_2^*$ are both rank 1 matrices with singular value $\sqrt{\frac{\|\vct{v}\|}{(\|\vct{v}\|^2 + \sigma^2)}}$, and the left singular vector of $\mtx{W}_1$ corresponds to the right singular vector of $\mtx{W}_2$ In particular,
\begin{align}
    \mtx{W}_1^* &= \sqrt{\frac{1}{\|\vct{v}\| (\|\vct{v}\|^2 + \sigma^2)}}\vct{\alpha} \vct{v}^{\top} \\
    \mtx{W}_2^* &= \sqrt{ \frac{\|\vct{v}\|}{(\|\vct{v}\|^2 + \sigma^2)}}\vct{\alpha}^{\top}
\end{align}
for some unit vector $\alpha \in \R^m$. From this expression it is easy to see that $\mtx{W}_n^* = \mtx{0}$.
\end{proof}

\section{Experimental Details}
\label{sec:app_exp_details}
We now outline further experimental results and provide further details about experiments included in the main body of text.

\textbf{Previous Experiment Details:} In Figure~\ref{fig:test-accs-pruning} c), the Cifar100 Resnet18 model used to prune the model was trained using SGD with data augmentation (horizontal flipping and random cropping), momentum=0.9, learning rate 0.02 and batch size 512 for 100 epochs. For DP-SGD, unless $\sigma$ is explicitly stated, then $\sigma$ has been calculated automatically by the Opacus package for the fixed $\epsilon$ and $\delta$ values reported using the \texttt{make\_private\_with\_epsilon} function.

\textbf{Additional Models and Datasets:} In this section, in addition to Cifar10 and Cifar100 as used previously, we also provide results using the MNIST~\citep{lecun1998gradient} and FashionMNIST~\citep{fmnist} datasets.  Table~\ref{tab:hyps} shows hyperparameters for SGD and DP-SGD variants of the model and dataset combinations included in this Appendix. Hyperparameters described in Table~\ref{tab:hyps} were selected via hyperparameter sweeps for DP-SGD and SGD with learning rates $\{0.01,0.05,0.1,0.5,1.0\}$ and batch sizes $\{32,64,128,512,1024\}$. Pruned variants were then trained using the same fixed hyperparameters.

\begin{table}[htbp!]
\centering
\begin{tabular}{cc|cc|cc}
\multirow{2}{*}{Model}       & \multirow{2}{*}{Dataset} & \multicolumn{2}{c|}{SGD} & \multicolumn{2}{c}{DP-SGD} \\ \cline{3-6}
 &                          & $\eta$    & Batch Size   & $\eta$     & Batch Size    \\ \hline
\multirow{2}{*}{CNN (LeNet)}
 & MNIST                    & 0.1       & 128          & 1.0        & 512           \\
 & FashionMNIST             & 0.1       & 64           & 0.5        & 512
\end{tabular}
\caption{Hyperparameter choices for experiments included in this section}
\label{tab:hyps}
\end{table}

\section{Further Experiments}
\label{sec:further_exps}

\textbf{Further results for Resnet18:} Figures~\ref{fig:test-accs-eps2} demonstrates the performance of pruning for Cifar10 and Resnet18 with $\epsilon=2$. As with $\epsilon=1$, pruning using non-private pre-training on Cifar10 results in the highest final test accuracy for DP-SGD with a pruning proportion of 0.7. As before, pruning using Cifar100 or 5\% of Cifar10 also improves the test accuracy of the model.

\textbf{Pruning percentages:} Figure~\ref{fig:pruning-strength} shows that for Cifar10 and Resnet18, increasing the pruning proportion past 0.7 does not further improve performance.

\begin{figure}[htbp!]
\centering
\begin{tabular}{cc}
\includegraphics[width=0.3\linewidth]{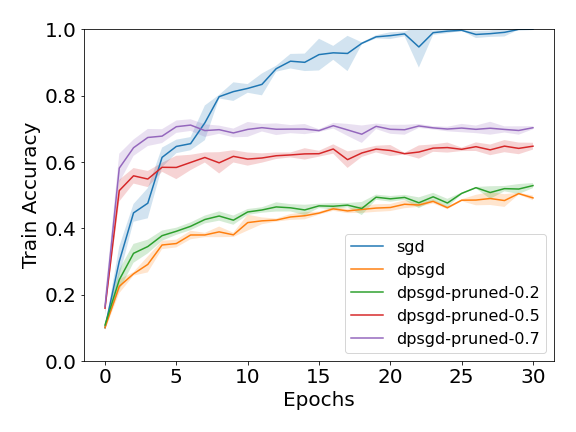} &
\includegraphics[width=0.3\linewidth]{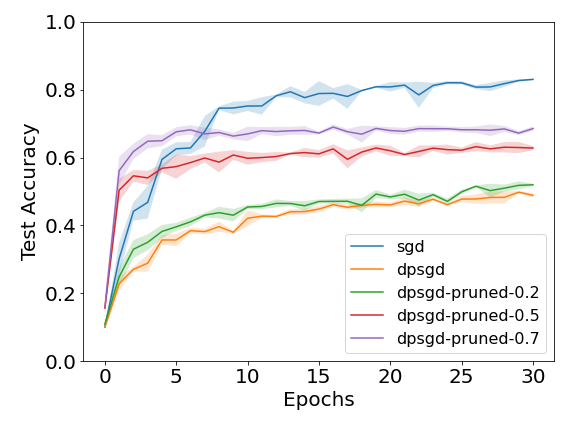} \\
a) & b)\\
\includegraphics[width=0.3\linewidth]{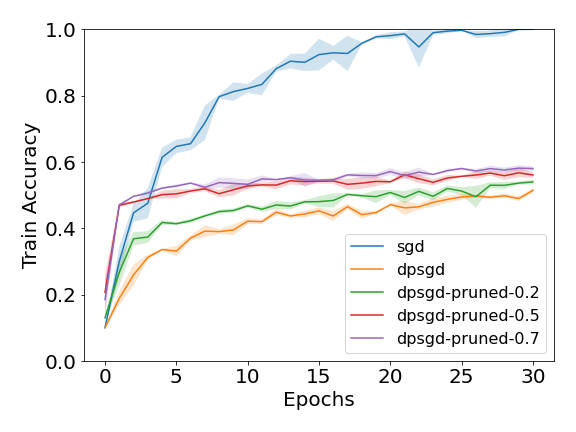}&
\includegraphics[width=0.3\linewidth]{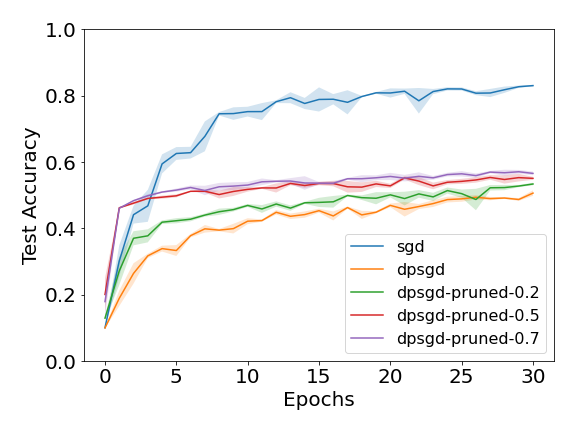}
\\
 c) & d)\\
\includegraphics[width=0.3\linewidth]{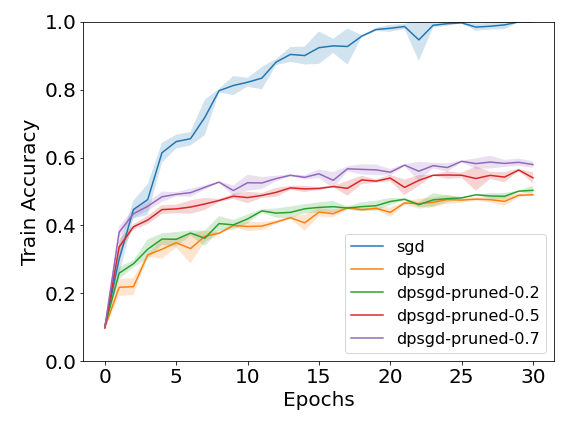} &
\includegraphics[width=0.3\linewidth]{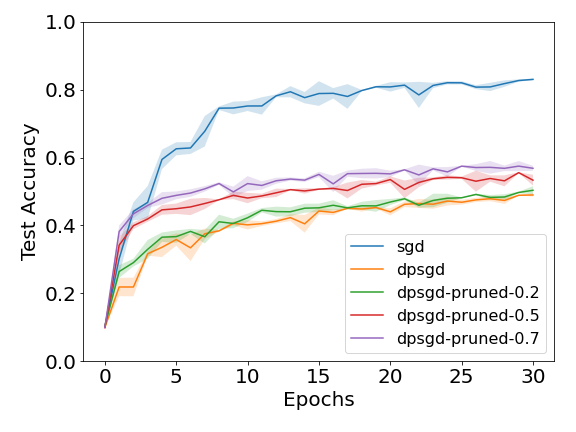}
\\
e)  & f)  \\
\end{tabular}
 \caption{DPSGD Test Accuracy comparison with $\epsilon=2$ for Resnet18 with Cifar10. a) and b) show pruning via non-private training with Cifar10 c) and d) show pruning via non-private training with 5$\%$ of Cifar10. e) and f) show pruning via training with Cifar100.}
\label{fig:test-accs-eps2}
\end{figure}

\begin{figure}[htbp!]
\centering
\begin{tabular}{cc}
\includegraphics[width=0.30\linewidth]{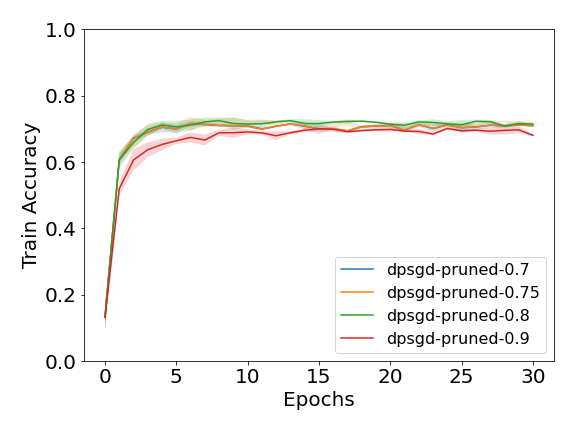} &
\includegraphics[width=0.3\linewidth]{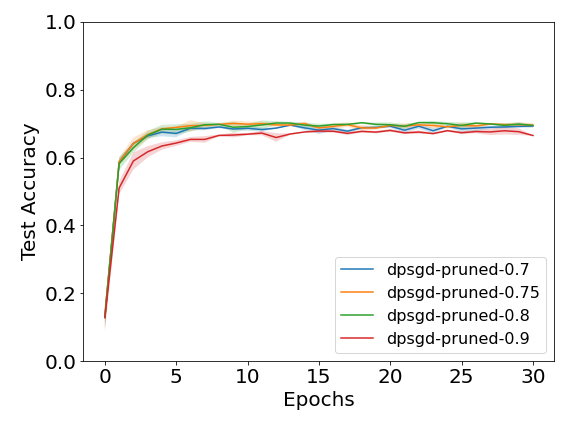}
\\
\end{tabular}
 \caption{Sweep over higher pruning percentages for Cifar10 and Resnet18.}
\label{fig:pruning-strength}
\end{figure}

\textbf{Parameter and Gradient Norm Distributions:} Gradient norms are larger during training for DP-SGD in comparison to SGD, as seen in Figure~\ref{fig:gn-over-epochs}. This will result in more severe clipping of the gradient than would otherwise have been necessary. Figure~\ref{fig:param-dists} shows that pruned DP-SGD models better replicate the parameter distribution of SGD models. DP-SGD models have significantly fewer parameters close to 0 than SGD models, their L2 distance from the origin is correspondingly around $1.7\times$ larger. Pruning reduces this to $1.2\times$ larger.

\begin{figure}[ht]%
 \centering
 \subfloat[]{\includegraphics[width=0.25\linewidth]{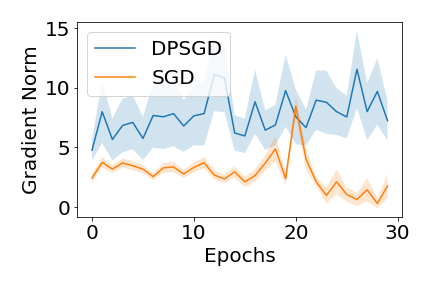}}\\
 \subfloat[]{\includegraphics[width=0.25\linewidth]{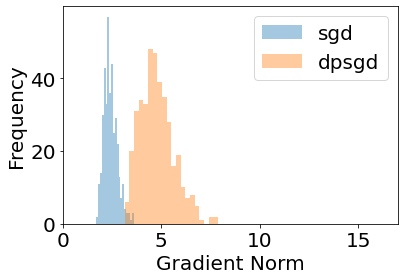}}
 \subfloat[]{\includegraphics[width=0.25\linewidth]{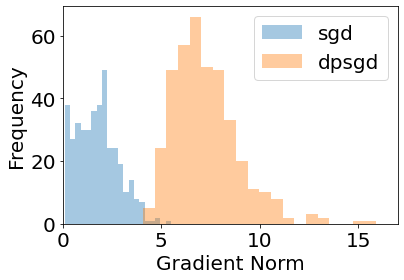}} \\
 \caption{Top row: Average gradient norm over training data batches of size 128 for SGD vs DP-SGD with Resnet18 and Cifar10. Bottom row: histogram comparisons of the gradient norms after 1 epoch of training (b) and 30 epochs of training (c).}%
 \label{fig:gn-over-epochs}%
\end{figure}

\begin{figure}[htbp!]
\centering
\begin{tabular}{cc}
\includegraphics[width=0.3\linewidth]{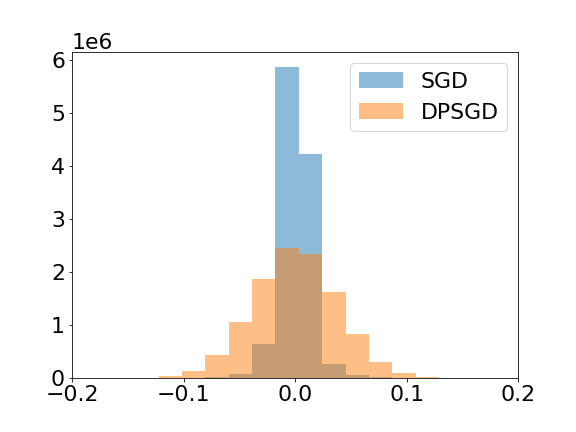} &
\includegraphics[width=0.3\linewidth]{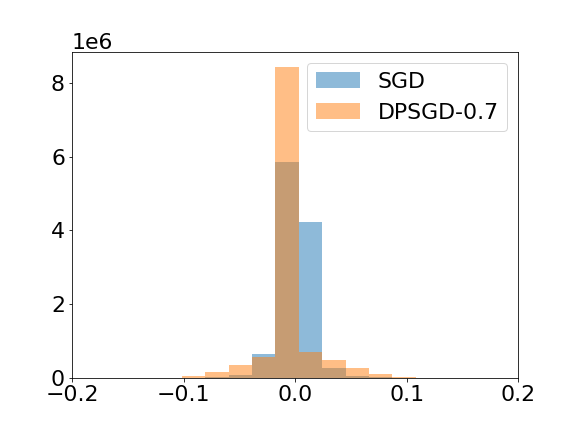}
\\
\end{tabular}
 \caption{DP-SGD models for $\epsilon=1$ have fewer parameters closer to 0 than SGD models, implying again that they occupy different regions of the parameter space. We find that SGD models have an L2 distance from the origin of $86\pm 0.5$ in comparison to $142\pm0.01$ for DP-SGD models. With pruning the DP-SGD distance reduces to $100\pm0.02$.  }
\label{fig:param-dists}
\end{figure}

\textbf{Other Models and Datasets:} In Figures~\ref{fig:mnist-eps1} and~\ref{fig:fashionmnist-eps1} we demonstrate that magnitude pruning increases the test accuracy of DP-SGD for a significantly smaller model (LeNet) using both the MNIST and FashionMNIST tasks.

\textbf{Varying the Phase 1 Training Duration:} In Figure~\ref{fig:pre_ep_comps}, we provide examples of different lengths of Phase 1 training ($k$) from Figure~\ref{fig:sgd-dpsgd-combos}. In all cases we see similar trends, with the DP-SGD model under-performing in comparison to SGD during the first few epochs, but also negatively impacting model performance later in training (Phase 2) when the model is initially trained with SGD.

\begin{figure}[htbp!]
\centering
\begin{tabular}{cc}
\includegraphics[width=0.3\linewidth]{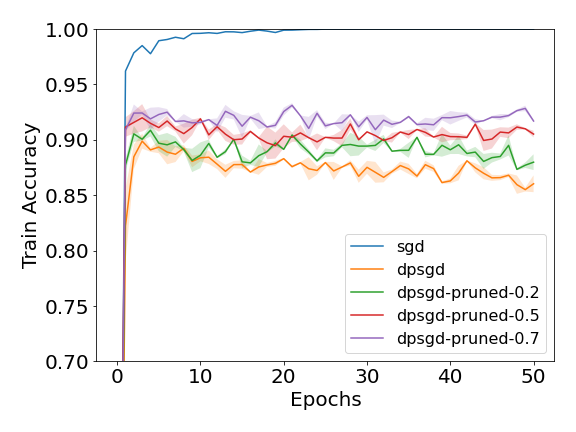} &
\includegraphics[width=0.3\linewidth]{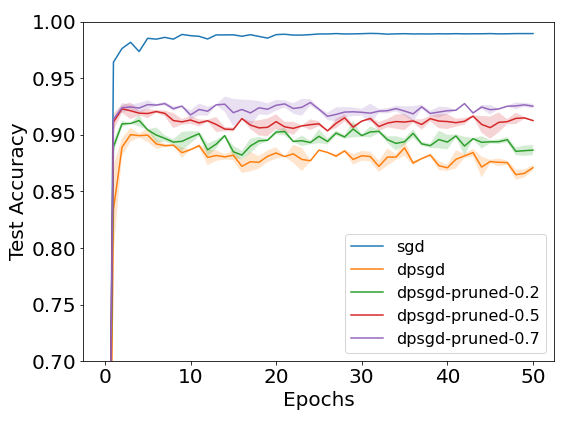}
\\
\end{tabular}
 \caption{Training and test accuracies for a CNN (LeNet) model with MNIST for $\epsilon=1$. Magnitude pruning performed via pre-training with MNIST.  }
\label{fig:mnist-eps1}
\end{figure}

\begin{figure}[htbp!]
\centering
\begin{tabular}{cc}
\includegraphics[width=0.3\linewidth]{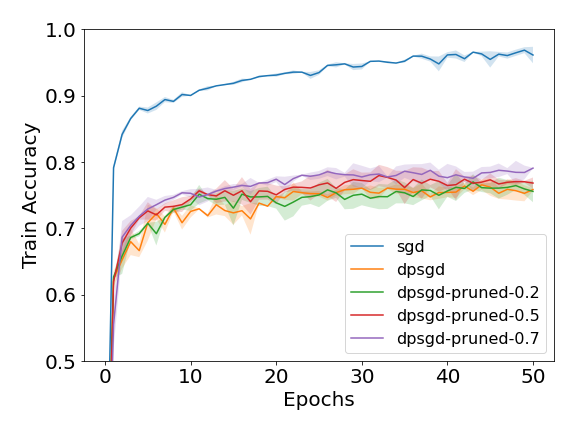} &
\includegraphics[width=0.3\linewidth]{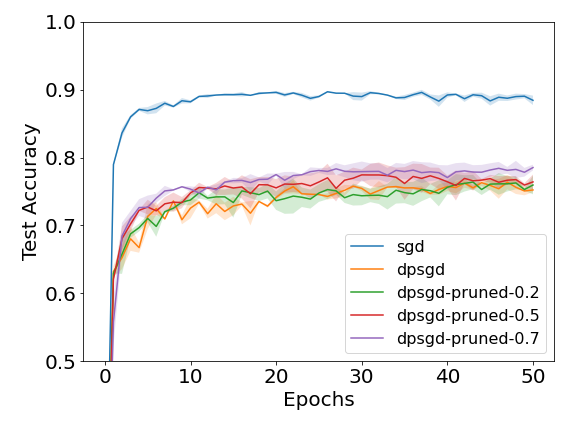}
\\
\end{tabular}
 \caption{Training and test accuracies for a CNN (LeNet) model with FashionMNIST for $\epsilon=1$. Magnitude pruning performed via pre-training with FashionMNIST.  }
\label{fig:fashionmnist-eps1}
\end{figure}

\begin{figure}[htbp!]
\centering
\begin{tabular}{cc}
\includegraphics[width=0.3\linewidth]{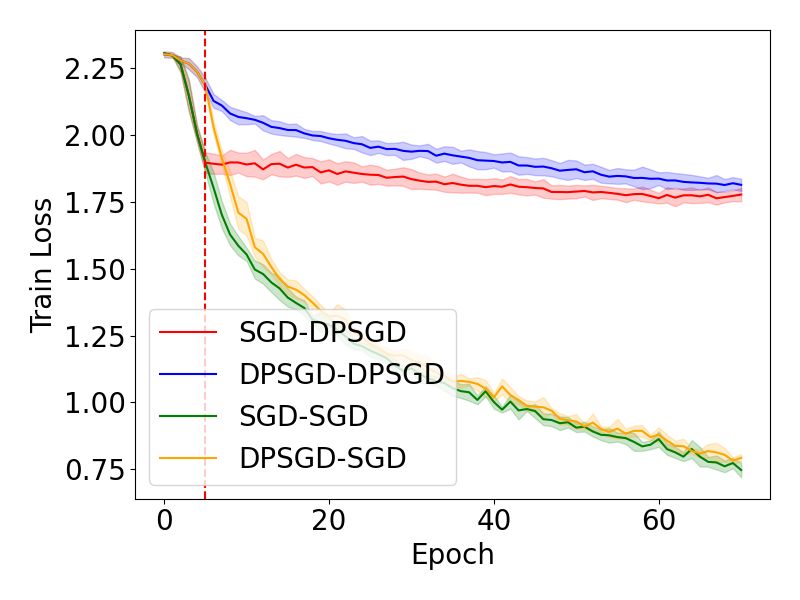} &
\includegraphics[width=0.3\linewidth]{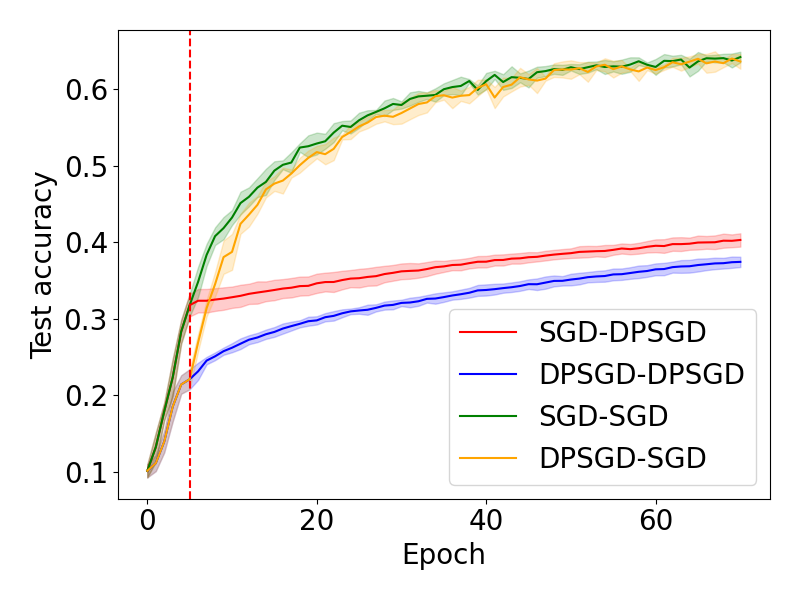}
\\
a) $k=5$ & b) $k=5$\\
\includegraphics[width=0.3\linewidth]{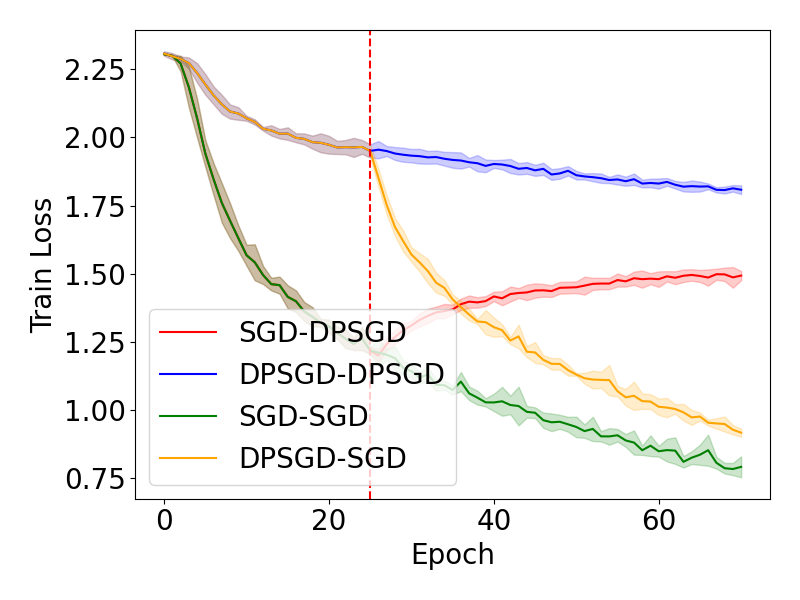} &
\includegraphics[width=0.3\linewidth]{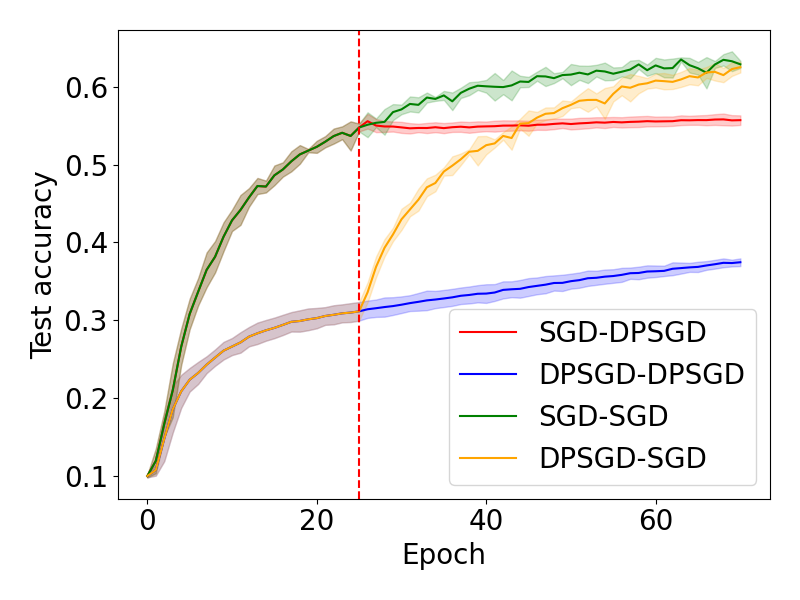}
\\
c) $k=25$ & d) $2k=5$\\

\includegraphics[width=0.3\linewidth]{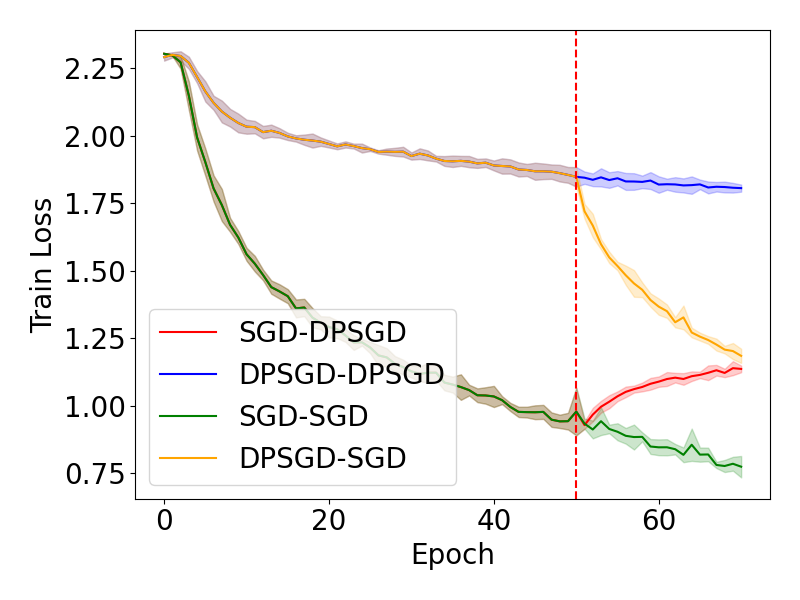} &
\includegraphics[width=0.3\linewidth]{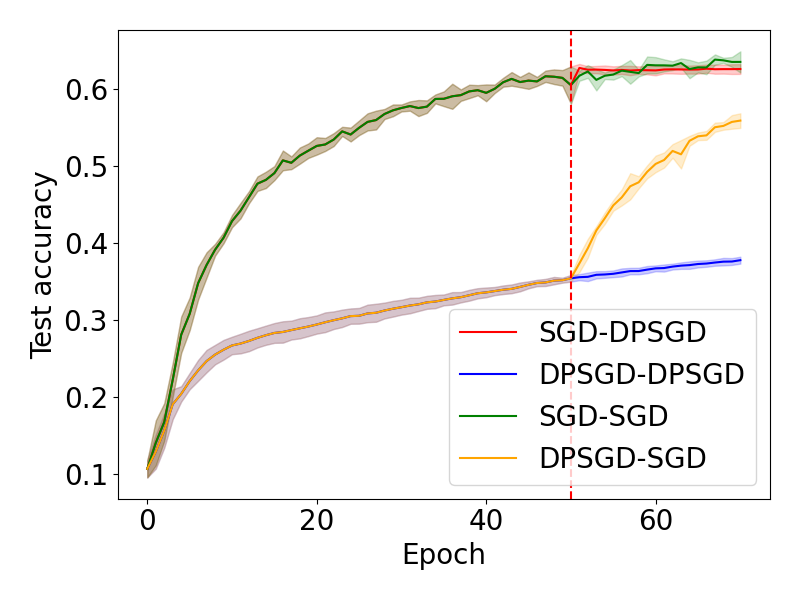}
\\
e)  $k=50$& f) $k=50$\\

\end{tabular}
 \caption{Performance with different Phase 1 and Phase 2 training methods. Later training epochs determine the final performance of both models for CIFAR10. DP-SGD used noise multiplier $\sigma=0.55$ and maximum gradient norm $C=1.0$ in each epoch, with $\epsilon\approx 7$ after training.   }
\label{fig:pre_ep_comps}
\end{figure}

\end{document}